\documentclass{article}

     \PassOptionsToPackage{numbers, compress}{natbib}


     \usepackage[preprint]{neurips_2020}



\usepackage[utf8]{inputenc} 
\usepackage[T1]{fontenc}    
\usepackage{hyperref}       
\usepackage{url}            
\usepackage{booktabs}       
\usepackage{amsfonts}       
\usepackage{nicefrac}       
\usepackage{microtype}      
\usepackage{graphicx}
\usepackage{amsmath}
\usepackage{amsthm}
\usepackage{mathtools}
\usepackage{caption}
\usepackage{subcaption}
\usepackage{tikz}
\usepackage{caption,subcaption}

\usepackage{wrapfig}

\newcommand*\circled[1]{\tikz[baseline=(char.base)]{
            \node[shape=circle,draw,inner sep=0.5pt,minimum size=0.15cm] (char) {#1};}}

\newtheorem{prop}{Proposition}
\newtheorem{theo}{Theorem}
\newcommand{\VAE}{\texttt{VAE}}
\newcommand{\VAEs}{\texttt{VAEs}}
\newcommand{\DGC}{\texttt{DGC}}
\newcommand{\VaDE}{\texttt{VaDE}}
\newcommand{\FL}{\texttt{FL}}
\newcommand{\Conv}{\texttt{Conv}}
\newcommand{\ReLU}{\texttt{ReLU}}
\newcommand{\sig}{\texttt{Sigmoid}}
\newcommand{\maxp}{\texttt{MaxPool}}
\newcommand{\batch}{\texttt{BatchNorm2d}}
\usepackage[title]{appendix}
\usepackage[disable]{todonotes} 
\usepackage{color}

\title{Deep Goal-Oriented Clustering}

%

\author{%
  Yifeng Shi \ \ \ \ \ \ Christopher M. Bender  \ \ \ \ \ \ Junier B. Oliva   \ \ \ \ \ \ Marc Niethammer\\
  Department of Computer Science\\
  The University of North Carolina at Chapel Hill\\
  \texttt{\{yifengs, bender, joliva, mn\}@cs.unc.edu} \\
}

\begin{document}

\maketitle

\begin{abstract}
  Clustering and prediction are two primary tasks in the fields of unsupervised and supervised learning, respectively. Although much of the recent advances in machine learning have been centered around those two tasks, the interdependent, mutually beneficial relationship between them is rarely explored. One could reasonably expect appropriately clustering the data would aid the downstream prediction task and, conversely, a better prediction performance for the downstream task could potentially inform a more appropriate clustering strategy. In this work, we focus on the latter part of this mutually beneficial relationship. To this end, we introduce Deep Goal-Oriented Clustering ({\DGC}), a probabilistic framework that clusters the data by jointly using supervision via \textit{side-information} and unsupervised modeling of the inherent data structure in an end-to-end fashion. We show the effectiveness of our model on a range of datasets by achieving prediction accuracies comparable to the state-of-the-art, while, more importantly in our setting, simultaneously learning congruent clustering strategies.
\end{abstract}

\section{Introduction}

Much of the advances in supervised learning in the past decade can be credited to the development of deep neural networks (DNN), a class of hierarchical function approximators that are capable of learning complex input-output relationships. Prime examples, for the success of these approaches are, for example, image recognition~\cite{Krizhevsky2012}, speech recognition~\cite{Nassif2019}, and neural translation~\cite{Bahdanau2015}. However, with the explosion of the size of modern datasets, it becomes increasingly unrealistic to manually label all available data for training. Hence, the ability to understand the inherent data structure through unsupervised clustering in the absence of response values is of increasing importance.

Several attempts to apply DNNs to unsupervised clustering have been made in the past few years~\cite{Caron2018,Law2017,Xie2016,Shaham2018}, centering around the concept that the input space in which traditional clustering algorithms operate is of importance. Hence, learning this space from data is desirable, in particular, for complex data. Despite the improvements these approaches have made on benchmark clustering datasets, the ill-defined, ambiguous nature\todo{I still don't know what is ill-defined and ambiguous about it. It's what you define it to be, no? YS---It's ill-defined and ambiguous because for instance different distance metrics can produce very different results, and without prior knowledge, it's impossible to know which is better or what better even means.} of clustering still remains a challenge. Such ambiguity is particularly problematic in scientific discovery, sometimes requiring researchers to choose from different, but potentially equally meaningful clustering results when little information is available a priori~\cite{Ronan2016}. 

When facing such ambiguity, using as much available side-information as one can to obtain the most appropriate clustering is a fruitful direction to resolve clustering ambivalence~\cite{Xing2002,Khashabi2015,Jin2013}. Such side-information is usually available in terms of constraints, such as \textit{must-link} and \textit{cannot-link} constraints~\cite{Wang2010,Wagstaff2000}, or via some pre-conceived notion of similarity~\cite{Xing2002}. Defining such constraints requires human expertise, and is potentially vulnerable to noise or labeling errors. Such expert definitions resemble the aforementioned difficulty with supervised learning requiring a large amount of manually labeled training data; such an approach also precludes the possibility of learning from a considerable amount of \emph{indirect}, but informative side-information (which might not be easily quantifiable by an expert), while forming a congruous clustering strategy at the same time. 

\paragraph{Main Contributions} 
We propose \emph{Deep Goal-Oriented Clustering} ({\DGC}), a probabilistic model formalizing the notion of using informative side-information to form a pertinent clustering strategy. Our main contributions are: 1) We introduce a probabilistic model that combines supervision via side-information and unsupervised modeling of data structure when searching for a clustering strategy; 2) We make minimal assumptions on what form the supervised side-information might take, and assume no explicit correspondence between the side-information and the clusters; 3) We train \DGC\ end-to-end so that the model learns from the available side-information while forming a clustering strategy at the same time, thereby taking advantage of the mutually beneficial relationship between learning from informative side-information and forming a congruous clustering strategy.

\section{Related Work}

Most related work in the existing literature can be classified into two categories: 1) One class of methods can utilize extra side-information to form better, less ambiguous clusters, but such side-information needs to be provided beforehand and cannot be learned; 2) The other class of methods can learn from the provided labels to lessen the ambiguity in the formed clusters, but these methods rely on the \textit{cluster assumption} (detailed below), and usually assume that the provided labels are discrete and the \textit{ground truth labels}. This excludes the possibility of learning from indirectly related but informative side-information. We propose a unified framework that allows using informative side-information directly or indirectly to arrive at better formed clusters. 

\paragraph{Side-information as constraints} Using side-information to form better clusters is well-studied. Wagstaff et al.~\cite{Wagstaff2000} consider both the must-link and the cannot-link constraints in the context of K-means clustering. Motivated by image segmentation, Orbanz et al.~\cite{Orbanz2007} proposed a probabilistic model that can incorporate must-link constraints. Khashabi et al.~\cite{Khashabi2015} proposed a nonparametric Bayesian hierarchical model to incorporate noisy side-information as soft-constraints. In supervised clustering~\cite{Finley2005}, the side-information is the a priori known complete clustering for the training set, which is being used as a constraint to learn a mapping between the data and the given clustering. In contrast, in this work, we do not assume that constraints are given a priori. Instead, we let the side-information guide the clustering during the training process.
 
\paragraph{Semi-supervised methods \& The \textit{cluster assumption}} Semi-supervised clustering approaches generally assume that they only have access to a fraction of the true cluster labels. Via constraints as the ones discussed above, the available labels can then be propagated to unlabeled data. This strategy can help mitigate the ambiguity in choosing among different clustering strategies~\cite{Bair2013}. The generative approach to semi-supervised learning introduced in~\cite{Kingma2014} is based on a hierarchical generative model with two variational layers. Although it was originally meant for semi-supervised classification tasks, it can also be used for clustering. However, if used for clustering, it has to strictly rely on the so-called \textit{cluster assumption},
which states that there exists a direct correspondence between labels/classes and clusters~\cite{Frber2010,Chapelle2006}. Sansone et al.~\cite{Sansone2016} proposed a method for joint classification and clustering to address the sometimes stringent cluster assumption most approaches in the literature make by modeling the cluster indices and the class labels separately, underscoring the possibility that each cluster may consist of multiple class labels. Deploying a mixture of factor analysers as the underlying probabilistic framework, they also used a variational approximation to maximize the joint log-likelihood.

In this work, we generalize the notion of learning from discrete, ground truth labels to learning from indirect, but informative side-information. We make virtually no assumptions on the form of $\textbf{y}$ nor its relations to the clusters. This makes our approach potentially more applicable to general settings. 


\section{Background \& Problem Setup}
\subsection{Background---Variational Deep Embedding}

The starting point for \DGC\ is the \textit{variational auto-encoder} ({\VAE})~\cite{Kingma2014AutoEncoding} with a prior distribution of the latent code chosen as a Gaussian mixture distribution and introduced in~\cite{Jiang2017}, as {\VaDE}. We briefly review the generative \VaDE\ approach here to provide the background for {\DGC}. We adopt the notation that lower case letters denote random variables; bold, lower case letters denote random vectors or random samples from them; and bold upper case letters denote random matrices. 

Assume the prior distribution of the latent code, $\textbf{z}$, belongs to the family of Gaussian mixture distributions, i.e. $p(\textbf{z}) = \sum_c p(\textbf{z}|c)p(c) = \sum_c \pi_c \mathcal{N}(\mu_c,\sigma^2_c\textbf{I})$ where $c$ is a random variable, with prior probability $\pi_c$, indexing the normal component with mean $\mu_c$ and variance $\sigma_c^2$. \VaDE\ allows for the clustering of the input data in the latent space, with each component of the Gaussian mixture prior representing an underlying cluster. A {\VAE}-based model can be efficiently described in terms of its generative process and inference procedure. We start with the generative process of {\VaDE}. Given an input $\textbf{x}\in R^d$, the following decomposition of the joint probability $p(\textbf{x},\textbf{z},c)$ details its generative process: $p(\textbf{x},\textbf{z},c) = p(\textbf{x}|\textbf{z})p(\textbf{z}|c)p(c)$. In words, we first sample the component index $c$ from a prior categorical distribution $p(c)$, then sample the latent code \textbf{z} from the component $p(\textbf{z}|c)$, and lastly reconstruct the input $\textbf{x}$ through the reconstruction network $p(\textbf{x}|\textbf{z})$. To perform inference and learn from the data, just as for regular {\VAEs}, a \VaDE\ is constructed to maximize the log-likelihood of the input data $\textbf{x}$ by maximizing its \textit{evidence lower bound} (ELBO):
\begin{equation} \label{vade}
    \begin{split}
    \log p(\textbf{x}) &\geq  \mathbb{E}_{q(\textbf{z}|\textbf{x})}\log p(\textbf{x}|\textbf{z}) - \mathbb{E}_{q(c|\textbf{x})}\log \frac{q(c|\textbf{x})}{p(c)} - \mathbb{E}_{q(\textbf{z}, c|\textbf{x})} \log \frac{q(\textbf{z}|\textbf{x})}{p(\textbf{z}|c)}\enspace,
    \end{split}
\end{equation}
where $q(\textbf{z},c|\textbf{x})$ denotes the variational posterior distribution over the latent variables given the input $\textbf{x}$. With proper assumptions on the prior and variational posterior distributions, the ELBO in Eq.~\eqref{vade} admits a closed-form expression in terms of the parameters of those distributions.
We refer readers to \cite{Jiang2017} for more technical details.

\subsection{Problem Setup}

Unlike the previously described unsupervised setting, we \emph{do} assume we have a response variable $\textbf{y}$, and our goal is to use \textbf{y} to inform a better clustering strategy. Abstractly, given the input-output random variable pair $(\textbf{x}, \textbf{y})$, we seek to divide the probability space of $\textbf{x}$ into non-overlapping subspaces that are meaningful in explaining the output \textbf{y}. In other words, we want to use the prediction task of mapping data points, $x$, sampled from the probability space of \textbf{x} to their corresponding sampled outcomes $y$ as a \textit{teaching agent}, to guide the process of dividing the probability space of $\textbf{x}$ into subspaces that optimally explain $y$. Since our goal is to discover the aforementioned subspace-structure without having a priori knowledge whether such a structure indeed exists, a probabilistic framework is more appropriate because of its ability to incorporate and reason with uncertainty. 

To this end, we use and extend the \VaDE\ framework, with the following assumption imposed on the latent code that specifically caters to our setting: Assume the input $\textbf{x}$ carries predictive information with respect to the output $\textbf{y}$. Since the latent code $\textbf{z}$ should inherit sufficient information from which the input $\textbf{x}$ can be reconstructed, it is reasonable to assume that $\textbf{z}$ also inherits that predictive information. This assumption allows us to assume that $\textbf{x}$ and $\textbf{y}$ are conditionally independent given $\textbf{z}$, i.e. $p(\textbf{x}, \textbf{y}|\textbf{z}) = p(\textbf{x}|\textbf{z}) p(\textbf{y}|\textbf{z})$.

\section{Deep Goal-Oriented Clustering}
\subsection{Generative Process}

\begin{wrapfigure}[15]{r}{0.4\textwidth}
\centering
\vspace{-26mm}
\includegraphics[width=1.0\linewidth]{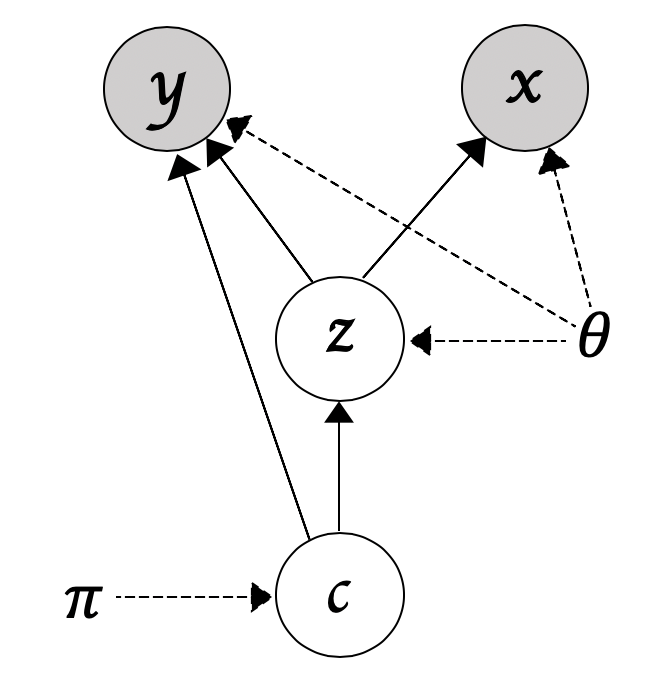}
\caption{The Bayesian network that underlies the generative process of {\DGC}. $\theta$ and $\pi$ together constitute the generative model parameters.}
\label{fig:bayesian_net}
\end{wrapfigure}

We inherit the standard distributional assumptions on the auto-encoding (unsupervised) part of our model from \cite{Jiang2017}. As for incorporating $\textbf{y}$ into this probabilistic model, recall from our previous discussion that $\textbf{y}$ might manifest with respect to the input differently across different subspaces of the input space if, indeed, such subspace-structure exists. Viewing $p(\textbf{y}|\textbf{z})$ as a conditional probability distribution over $\textbf{y}$ resulting from a functional transformation from \textbf{z} to the space of probability distributions over \textbf{y},  we can assume that the ground truth transformation function, $g_c$, is different for each subspace indexed by $c$. More concretely, if $\textbf{z}\sim p(\textbf{z}|c)$ for some index $c$, we assume $p(\textbf{y}|\textbf{z},c) \propto g_c(\textbf{z})$ for some subspace-specific $g_c$. As a result, we learn a different mapping function for each subspace.

The overall generative process of our model is as follows: \textbf{1}. Generate $c\sim \textrm{Cat}(\pi)$; \textbf{2}. Generate $\textbf{z} \sim p(\textbf{z}|c)$; \textbf{3}. Generate $x \sim p(\textbf{x}|\textbf{z})$; \textbf{4}. Generate $y \sim p(\textbf{y}|\textbf{z},c)$. The Bayesian network that underlies \DGC\ is shown in Fig.~\ref{fig:bayesian_net}, and with the previously stated assumption, the joint distribution can thus be decomposed as: $p(\textbf{x},\textbf{y},\textbf{z},c) = p(\textbf{y}|\textbf{z},c) p(\textbf{x}|\textbf{z}) p(\textbf{z}|c) p(c)$.

\subsection{Inference \& Variational Lower Bound}

With the aforementioned setup and the assumption that the variational posterior distribution $q(\textbf{z},c|\textbf{x})$ belongs to the class of mean-field distributions \cite{Wainwright2008}, i.e. $q(\textbf{z},c|\textbf{x}) = q(\textbf{z}|\textbf{x}) q(c|\textbf{x})$, we have the following variational lower bound (see the Appendix for a detailed derivation)
	\begin{equation} \label{elbo}
        \begin{split}
        \log p(\textbf{x},\textbf{y}) 
        &\geq \underbrace{\mathbb{E}_{q(\textbf{z},c|\textbf{x})}\log p(\textbf{y}|\textbf{z},c)}_{\text{Probabilistic Ensemble}} + \underbrace{\mathbb{E}_{q(\textbf{z},c|\textbf{x})}\log \frac{p(\textbf{x},\textbf{z},c)}{q(\textbf{z},c|\textbf{x})}}_\text{ELBO for \VAE\ with GMM prior} = \mathcal{L}_{\text{ELBO}}\,.
        \end{split}
    \end{equation}
We point out an implication of the first term in $\mathcal{L}_{\text{ELBO}}$: in essence, it allows for a probabilistic ensemble of classifiers based on the subspace index. This can be seen as follows
$$\mathbb{E}_{q(\textbf{z},c|\textbf{x})}\log p(\textbf{y}|\textbf{z},c) = \mathbb{E}_{q(\textbf{z}|\textbf{x})}\left[\sum_k \lambda_k\log p(\textbf{y}|\textbf{z},c=k)\right] \approx \frac{1}{M}\sum_{l=1}^M \left[\sum_k \lambda_k\log p(\textbf{y}|\textbf{z}^{(l)},c=k)\right]$$
where $\lambda_k = q(c=k|\textbf{x})$ and $l$ indexes the Monte Carlo samples used to approximate the expectation with respect to $q(\textbf{z}|\textbf{x})$. This is sensible, especially given that our goal is to discover appropriate subspace structure in the latent space without knowing whether one exists a priori. The probabilistic ensemble in this scenario offers the model the luxury to maintain necessary uncertainty with respect to the discovered subspace structure until an unambiguous structure is captured. We next discuss how to appropriately choose the variational posterior distributions.


\subsection{Mean-field Variational Posterior Distributions}
Following the variational neural net approaches \cite{Kingma2014}, we choose $q(\textbf{z}|\textbf{x})$ to be $\mathcal{N}\left(\textbf{z}|\pmb{\Tilde{\mu}}_{\textbf{z}}, \pmb{\Tilde{\sigma}}^2_{\textbf{z}}\textbf{I}\right)$ where $\left[\pmb{\mu}_{\textbf{z}}, \pmb{\sigma}^2_{\textbf{z}}\right] = h(\textbf{x};\theta)$, with $h$ being parametrized by a feed-forward neural network with weights $\theta$. Although it may seems unnatural to use a unimodal distribution to approximate a multimodal distribution, when the learned $q(c|\textbf{x})$ becomes discriminative, dissecting the original $\mathcal{L}_{\text{ELBO}}$ derived in Eq.~\eqref{elbo} in the following way indicates that such an approximation will not incur a sizeable information loss (see Appendix for a detailed derivation):
\begin{equation} \label{diff_decomp}
\begin{aligned}
    \mathcal{L}_{\text{ELBO}} = \mathbb{E}_{q(\textbf{z},c|\textbf{x})}\log p(\textbf{y}|\textbf{z},c) + & \mathbb{E}_{q(\textbf{z}|\textbf{x})}\log p(\textbf{x}|\textbf{z}) \\
    & - \mathbb{KL}\left(q(c|\textbf{x})||p(c)\right) - \sum_{k}\lambda_k\mathbb{KL}\left(q(\textbf{z}|\textbf{x})||p(\textbf{z}|c=k)\right)\,,
\end{aligned}
\end{equation}
where $\lambda_k$ denotes $q(c=k|\textbf{x})$. Analyzing the last term in Eq.~\eqref{diff_decomp}, we notice that if the learned variational posterior $q(c|\textbf{x})$ is very discriminative and puts most of its weight on one specific index $c$, all but one $\mathbb{KL}$ terms in the weighted sum will be close to zero. Therefore, choosing $q(\textbf{z}|\textbf{x})$ to be unimodal to minimize that specific $\mathbb{KL}$ term would be appropriate, as $p(\textbf{z}|c)$ is assumed to be a unimodal normal distribution for all $c$.

Choosing $q(c|\textbf{x})$ appropriately requires us to analyze the proposed $\mathcal{L}_{\text{ELBO}}$ in greater detail based on the following decomposition (see the Appendix for a detailed derivation):
	\begin{equation} \label{decomp}
        \begin{split}
        \mathcal{L}_{\text{ELBO}} &= \underbrace{ \mathbb{E}_{q(\textbf{z},c|\textbf{x})}\log p(\textbf{y}|\textbf{z},c)}_{\circled{1}} + \underbrace{\mathbb{E}_{q(\textbf{z}|\textbf{x})}\log \frac{p(\textbf{x},\textbf{z})}{q(\textbf{z}|\textbf{x)}}}_{\circled{2}} - \underbrace{ \mathbb{E}_{q(\textbf{z}|\textbf{x})} \mathbb{KL}\left(q(c|\textbf{x})||p(c|\textbf{z})\right)}_{\circled{3}}.
        \end{split}
    \end{equation}
        We observe that since $\circled{2}$ does not depend on $c$, $q(c|\textbf{x})$ should be chosen to maximize $\circled{1} - \circled{3}$. Moreover, the expectation over $q(\textbf{z}|\textbf{x})$ does not depend on $c$, and thus has no influence over our choice of $q(c|\textbf{x})$. Casting finding $q(c|\textbf{x})$ as an optimization problem, we have
\begin{equation} \label{opti}
\begin{aligned}
\min_{q(c|\textbf{x})} \quad & f_0(q) =  \mathbb{KL}\left(q(c|\textbf{x})||p(c|\textbf{z})\right) -  \mathbb{E}_{q(c|\textbf{x})} \log p(\textbf{y}|\textbf{z},c)\,,\\
\textrm{s.t.} \quad & \sum_k q(c=k|\textbf{x}) = 1,\quad
  q(c=k|\textbf{x}) \geq 0,\ \ \forall k\,.
\end{aligned}
\end{equation}
The objective functional $f_0$ is convex over the probability space of $q$, as the \textit{Kullback–Leibler divergence} is convex in $q$ and the expectation is linear in $q$. Analytically solving the convex program (\ref{opti}) (see the Appendix for a detailed derivation), we obtain
\begin{equation} \label{chooseq}
  q(c=k|\textbf{x})  = \frac{p(\textbf{y}|\textbf{z},c=k)\cdot p(c=k|\textbf{z})}{\sum_k p(\textbf{y}|\textbf{z},c=k)\cdot p(c=k|\textbf{z})}\,. 
\end{equation}
 The form of $q$ derived in Eq.~\eqref{chooseq} aligns with our intuition. To better facilitate understanding, we interpret Eq.~\eqref{chooseq} in two extremes. If $\textbf{y}$ is evenly distributed across the different subspaces, i.e. the ground truth transformations $g_c$ are the same, then $q(c=k|\textbf{x}) = p(c=k|\textbf{z})$, which is what one would choose for unsupervised clustering~\cite{Jiang2017}. However, if the supervised task is informative while the unsupervised task is not, i.e. $p(c|\textbf{z})$ is a uniform distribution, the likelihoods $\{p(\textbf{y}|\textbf{z},c=k)\}_k$ would dominate $q$. Therefore, one could interpret any in-between scenario as a balance that automatically weights the supervised and the unsupervised tasks based on how strong their signals are with respect to grouping the latent probability space into different subspaces. 

\subsection{Evaluating on Unlabeled Data}
When presented with the response variable $\textbf{y}$, Eq.~\eqref{chooseq} gives the optimal choice of $q(c|\textbf{x})$ that allows the network to incorporate both the supervised and unsupervised signals when weighting the clusters. Nevertheless, just as with most supervised tasks, one usually does not have access to \textbf{y} on newly collected (test) data points, which prohibits us from evaluating $q(c|\textbf{x})$. One easy remedy to this would be to use $p(c|\textbf{z})$ when $\textbf{y}$ is not available; however, having an ensemble of well-trained conditional likelihood mappings, $\{p(\textbf{y}|\textbf{z},c=k)\}_k$, and not utilizing it when evaluating on new data points seems wasteful. We now show that by incorporating a regularization term to $\mathcal{L}_{\text{ELBO}}$, \DGC\ can be naturally generalized to unlabeled testing samples. We define our regularized ELBO as:
\begin{equation} \label{regu}
    \begin{aligned}
        \mathcal{L}^{\textbf{regu}}_{\text{ELBO}} =\mathcal{L}_{\text{ELBO}} - \mathbb{E}_{q(\textbf{z},c|\textbf{x})}\mathbb{H}_{\textbf{max}}\left(p(\textbf{y}|\textbf{z},c)\right)\,,
    \end{aligned}
\end{equation}
where $\mathbb{H}_{\textbf{max}}\left(p(\textbf{y}|\textbf{z},c)\right)= \mathbf{max}\{\mathbb{H}\left(p(\textbf{y}|\textbf{z},c)\right),0\}$ and $\mathbb{H}\left(p(\textbf{y}|\textbf{z},c)\right) = -\mathbb{E}_{p(\textbf{y}|\textbf{z},c)}\log p(\textbf{y}|\textbf{z},c)$. We make a few additional comments here. Firstly, on the one hand if $\textbf{y}$ is a discrete random variable, $\mathbb{H}_{\textbf{max}}\left(p(\textbf{y}|\textbf{z},c)\right)= \mathbb{H}\left(p(\textbf{y}|\textbf{z},c)\right)$, which is the entropy of $p(\textbf{y}|\textbf{z},c)$ which is always non-negative; on the other hand when $\textbf{y}$ is continuous, although the differential entropy of $p(\textbf{y}|\textbf{z},c)$ can take any sign, the $\mathbf{max}$ operator ensures that $\mathbb{H}_{\textbf{max}}\left(p(\textbf{y}|\textbf{z},c)\right)$ will remain non-negative. Therefore, adding (a convex combination of) negative entropies preserves the inequality, and thus $\mathcal{L}^{\textbf{regu}}_{\text{ELBO}}$ remains a proper lower bound.
Secondly, just as we obtained $q(c|\textbf{x})$ in Eq.~\eqref{chooseq}, solving a similar convex program still provides the optimal choice of $q(c|\textbf{x})$ in the presence of the regularizer
\begin{equation} \label{chooseq_regu}
  q(c=k|\textbf{x})  = \frac{e^{\log p(\textbf{y}|\textbf{z},k)-\mathbb{H}_{\textbf{max}}\left(p(\textbf{y}|\textbf{z},c)\right)}\cdot p(k|\textbf{z})}{\sum_j e^{\log p(\textbf{y}|\textbf{z},j)-\mathbb{H}_{\textbf{max}}\left(p(\textbf{y}|\textbf{z},c)\right)}\cdot p(j|\textbf{z})} \,.
\end{equation}
Finally, it is crucial to note that this form of regularization penalizes the conditional distributions, $p(\textbf{y}|\textbf{z},c)$. Hence, clusters with higher posterior weights are the ones that have low entropies. This aligns with our intuition: the most suitable cluster to explain a given $\textbf{y}$ should be more certain in how it is distributed. This attribute, once appropriately cultivated, can be used as a replacement for the likelihoods in Eq.~\eqref{chooseq}, as entropy characterizes the distribution, but does not require concrete samples $\textbf{y}$ to be calculated. More specifically, when evaluating on an unlabeled data point, we use
\begin{equation} \label{chooseq_test}
  q^{\text{test}}(c=k|\textbf{x})  = \frac{e^{-\mathbb{H}_{\textbf{max}}\left(p(\textbf{y}|\textbf{z},c)\right)}\cdot p(k|\textbf{z})}{\sum_j e^{ -\mathbb{H}_{\textbf{max}}\left(p(\textbf{y}|\textbf{z},c)\right)}\cdot p(j|\textbf{z})} 
\end{equation}
to weight the clusters, aligning with our previous reasoning; the cluster that corresponds to $p(\textbf{y}|\textbf{z},c)$ with the lowest entropy will be weighted most heavily. This allows the model to use the tuned conditional likelihood mappings even when $\textbf{y}$ is not available.


\section{Experiments}

We investigate the efficacy of our proposed \DGC\ on a range of datasets. We refer the reader to the Appendix for experimental details. E.g., regarding the train/validation/test split, the chosen network architecture, the choices of learning rate and optimizer.




\subsection{Noisy MNIST}


We introduce a synthetic data experiment using the MNIST dataset, which we name the \textit{noisy MNIST}, to illustrate that the supervised part of \DGC\ can enhance the performance of an otherwise well-performing unsupervised counterpart. Further, we explore the behavior of \DGC\ without its unsupervised part to demonstrate the importance of capturing the inherent data structure. The dataset is created as follows. We extract images that correspond to the digits 2 and 7 from MNIST. For each digit, we randomly select half of the images for that digit and superpose noisy backgrounds onto those images, where the backgrounds are cropped from randomly selected CIFAR-10 images. See Fig.~\ref{fig:true} for an example, where the two images in the first row are normal MNIST digits, and the images in the second row are the ones superposed with noisy backgrounds. Each image is also associated with a binary response variable $\textbf{y}$, indicating if the digit is a 2 or a 7. Our goal is to cluster the images into 4 clusters: digits 2 and 7, with and without background. However, we are only using the binary responses for supervision and have no direct knowledge of the background.


\begin{figure*}[t]
\centering
\begin{tabular}{ccc}
\begin{tabular}{cc}
\begin{subfigure}{.175\textwidth}
  \centering
  \includegraphics[width=1.0\linewidth]{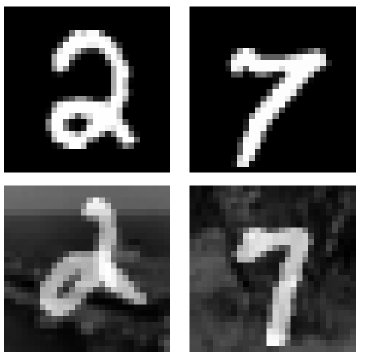}
  \caption{Real}
  \label{fig:true}
\end{subfigure}
\\
\begin{subfigure}{.175\textwidth}
  \centering
  \includegraphics[width=1.0\linewidth]{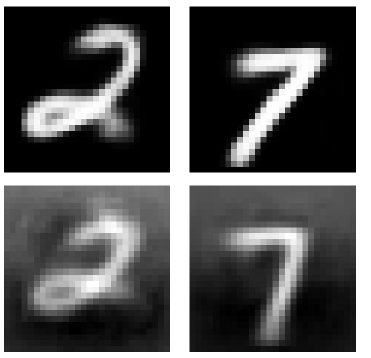}
  \caption{Generated}
  \label{fig:gen}
\end{subfigure}
\end{tabular}
&
\begin{subfigure}{.35\textwidth}
  \centering
  \includegraphics[width=1.0\linewidth]{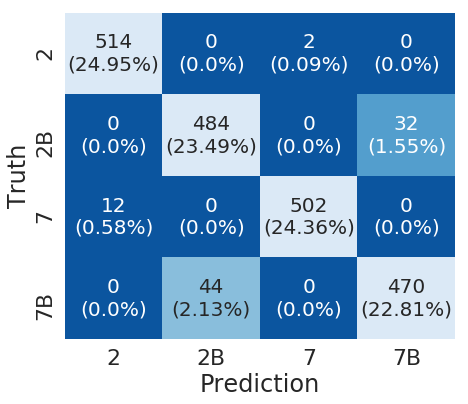}
  \caption{Confusion Matrix---\VaDE}
  \label{fig:conf_vae}
\end{subfigure}
&
\begin{subfigure}{.35\textwidth}
  \centering
  \includegraphics[width=1.0\linewidth]{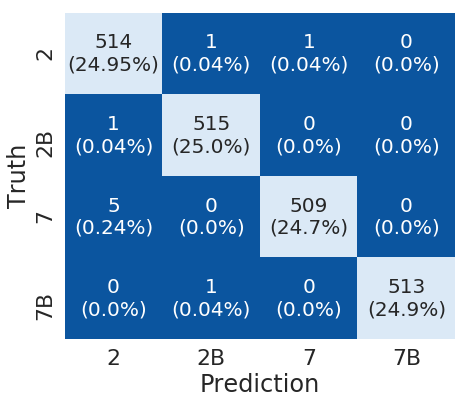}
  \caption{Confusion Matrix---\DGC}
  \label{fig:conf_dgc}
\end{subfigure}
\end{tabular}
\caption{(a) Created ground truth images; (b) Samples from a trained {\DGC}. For the confusion matrices presented in (c) and (d), the abbreviation, 2B/7B, in the row/column labels denotes digits 2/7 with background. For both (c) and (d), rows represent the predicted clustering labels, and columns represent the ground truth clustering labels. E.g., if the entry in the matrix that corresponds to the predicted label, ``2B'', and the true label, ``7B'', is 44 (as it is the case in (c)), this means {\VaDE} misclassified 44 images. Specifically, true 7's with background were confused with 2's with background in this case.}
\label{fig:mnist}
\end{figure*}

As a baseline, \VaDE\ already performs well on this dataset, achieving a clustering accuracy of 95.6\% when the desired number of clusters is set to 4. 
Fig.~\ref{fig:conf_vae} shows that \VaDE\ distinguishes well based on the presence or absence of the noisy background. Incorrectly clustered samples are mainly due to {\VaDE}'s inability to differentiate the underlying digits. This is reasonable behavior: if the background signal dominates, the network may focus on the background for clustering as it has no explicit knowledge about the digits.

{\DGC}, on the other hand, performs nearly perfectly (with a clustering accuracy of 99.6\%) in this setting with the help of the added supervision. Firstly, we see that \DGC\ mitigates the difficulty of distinguishing between digits under the presence of strong, noisy backgrounds (as shown in Fig.~\ref{fig:conf_dgc}, where \DGC\ makes almost no mistakes in distinguishing between digits even in the presence of noisy backgrounds). Secondly, this added supervision does not overshadow the original advantage of \VaDE\ (i.e. distinguishing whether the images contain background or not). Instead, it enhances the overall model on cases where the unsupervised part, i.e., {\VaDE}, struggles. Lastly, as detailed in~\cite{Sansone2016} and earlier sections, most existing approaches that take advantage of available labels rely on \textit{the cluster assumption}, which assumes a direct relationship between the clusters and the labels used for supervision. This experiment is a concrete example that demonstrates that \DGC\ does not need to rely on such an assumption to form a sound clustering strategy. Instead, \DGC\ is able to work with class labels that are only partially indicative of what the final clustering strategy should be, potentially making \DGC\ more applicable to more general settings.

\paragraph{Ablation study} To further test the importance of each part of our model, we ablate the probabilistic components (i.e. we get rid of the decoder and the loss terms associated with it, so that only the supervision will inform how the clusters are formed in the latent space) and perform clustering using only the supervised part of our model. We find that clustering accuracy degrades from the nearly-perfect accuracy obtained by the full model to 50\%. Coupled with the improvements over {\VaDE}, this indicates that each component of our model contributes to the final accuracy and that our original intuition that supervision and clustering may reinforce each other is correct.

\subsection{Pacman}

In this experiment we test {\DGC}'s ability to learn a clustering strategy when using a continuous response as \textit{side-information}. 
The Pacman-shaped data, depicted in Fig.~\ref{fig:1}, consists of two annuli. Each point in the two annuli is associated with a continuous response value. These response values decrease linearly (from 1 to 0) in one direction for the inner (yellow) annulus, and increase exponentially (from 0 to 1) in the opposite direction for the outer (purple) annulus. See Fig.~\ref{fig:2} for a graphical illustration of the response values and Fig.~\ref{fig:3} for an overall 3D illustration of the dataset. We choose the linear/exponential rates for the responses to not only test our model's ability to detect different trends, but also to test its ability to fit different functional trends.

\begin{figure}[h]
    \centering 
\begin{subfigure}{0.25\textwidth}
  \includegraphics[width=\linewidth]{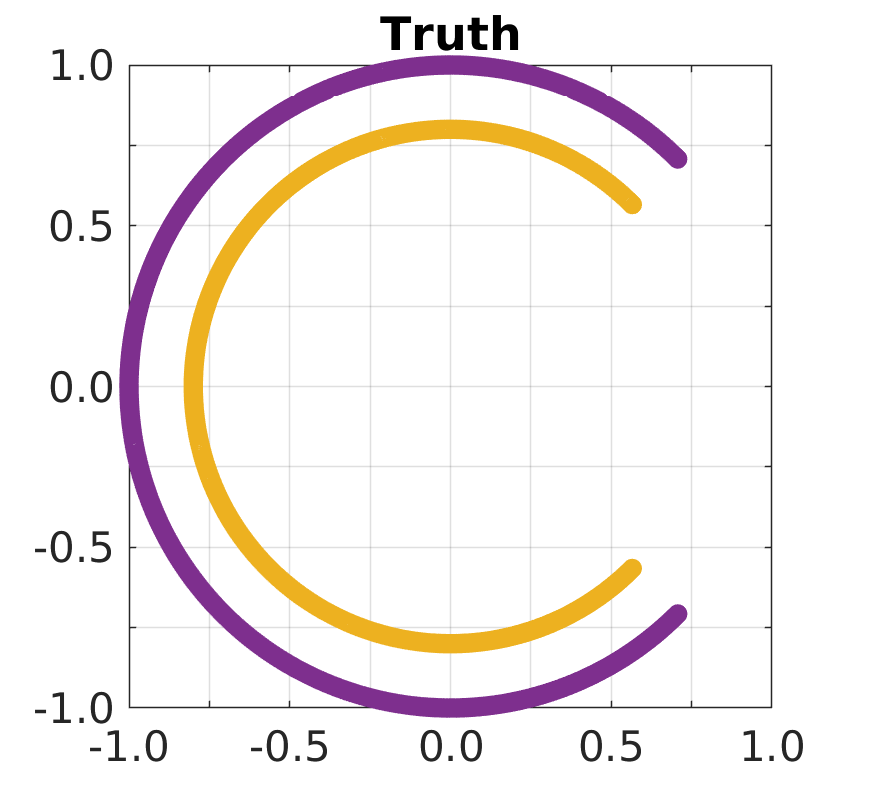}
  \caption{Pacman Inputs}
  \label{fig:1}
\end{subfigure}\hfil 
\begin{subfigure}{0.25\textwidth}
  \includegraphics[width=\linewidth]{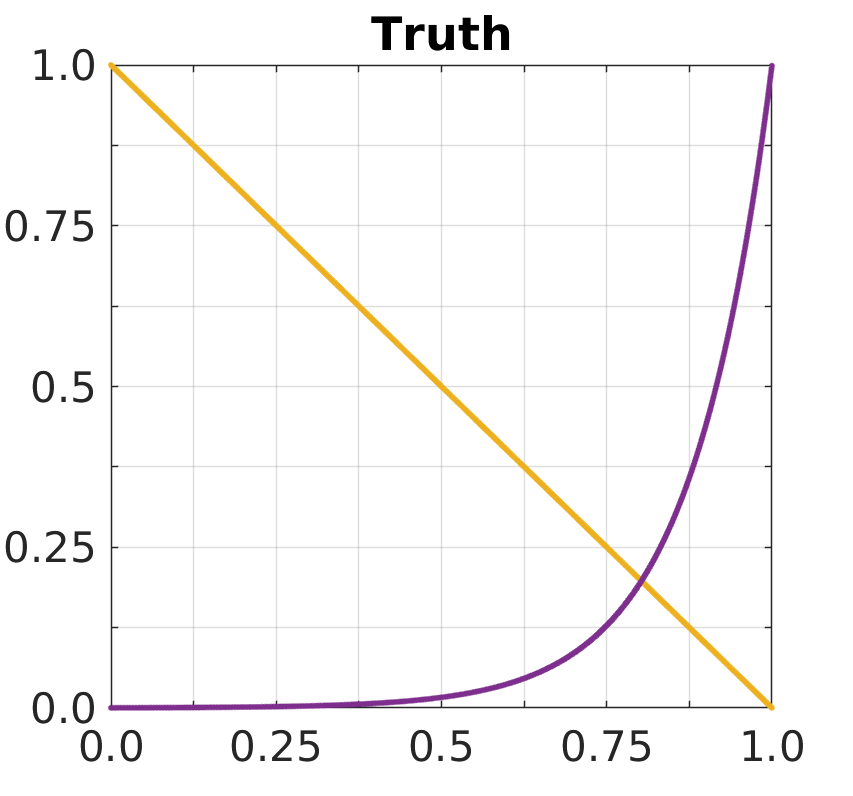}
  \caption{Pacman Task}
  \label{fig:2}
\end{subfigure}\hfil 
\begin{subfigure}{0.25\textwidth}
  \includegraphics[width=\linewidth]{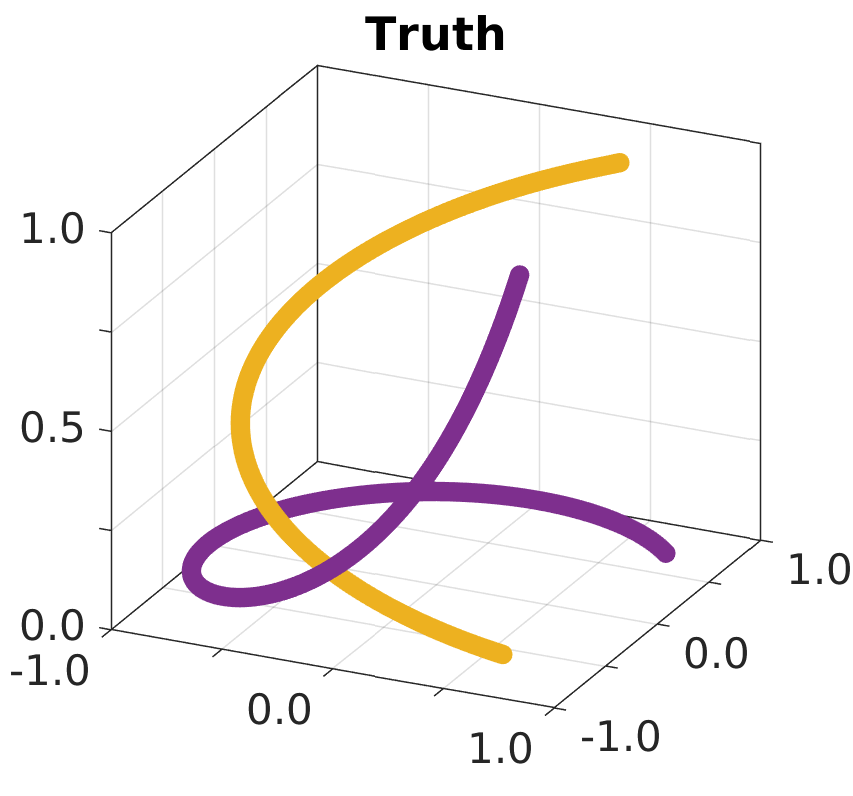}
  \caption{Pacman Task vs Inputs}
  \label{fig:3}
\end{subfigure}

\medskip
\begin{subfigure}{0.25\textwidth}
  \includegraphics[width=\linewidth]{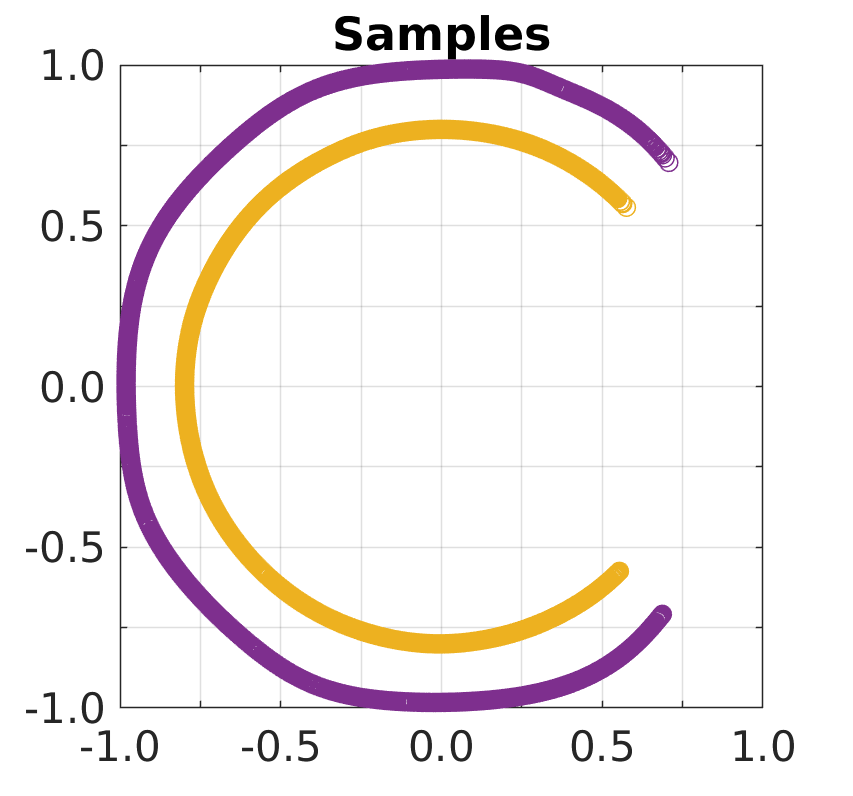}
  \caption{Pacman Inputs}
  \label{fig:4}
\end{subfigure}\hfil 
\begin{subfigure}{0.25\textwidth}
  \includegraphics[width=\linewidth]{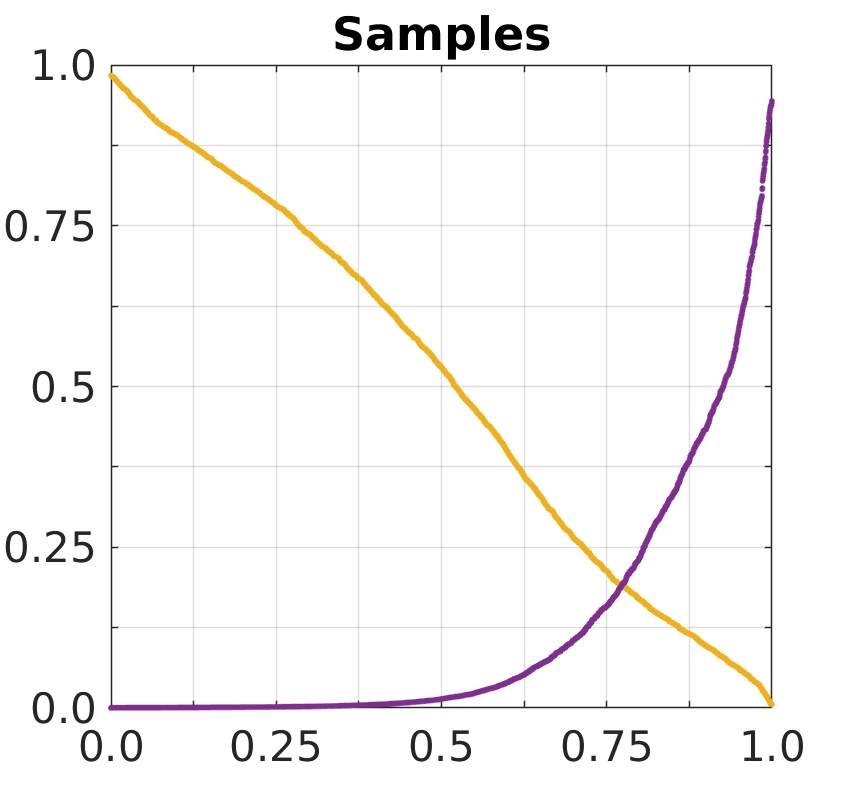}
  \caption{Pacman Task}
  \label{fig:5}
\end{subfigure}\hfil 
\begin{subfigure}{0.25\textwidth}
  \includegraphics[width=\linewidth]{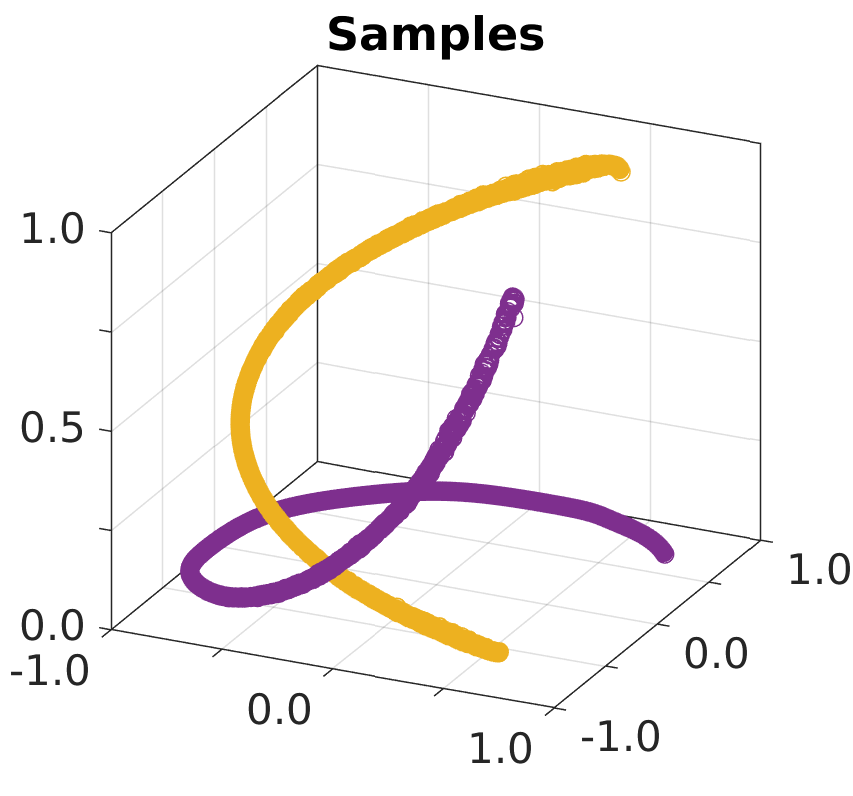}
  \caption{Pacman Task vs Inputs}
  \label{fig:6}
\end{subfigure}
\caption{The first row shows the ground truth 2D Pacman, the responses \textbf{y} alone, and the combined 3D Pacman. The second row depicts the corresponding generated counterparts from {\DGC}.}
\label{fig:pacman}
\end{figure}

Our goal is to separate the two annuli depicted in Fig.~\ref{fig:1}. This is challenging as the annuli were deliberately chosen to be very close to each other. First, we applied various unsupervised learning methods including K-means, hierarchical clustering, and {\VaDE}, to only the 2D Pacman-shaped data (i.e., not using the responses, but only the 2D Cartesian coordinates). None of the unsupervised methods managed to separate the two annuli. These approaches also tend to results in different clusterings as they are based on different distance metrics (see the Appendix for these results). This phenomenon echos a deep-rooted obstacle for clustering methods in general: the concept of clusters is inherently subjective, and different distance metrics can potentially produce different, but sometimes equally meaningful, clustering results.\todo{While this difficulty and subjectivity might be true it is not really strengthening your case, is it? It seems like you have exactly the same issues.---YS: I'm not sure I follow here. The point is the unsupervised methods have trouble distinguishing the two annuli based on arbitrarily chosen distance metric. With the added supervision, we can. --- i think MN just wants you to say this specifically, after you note the ambiguity as a way to juxtapose what we're doing vs what's normally done} 
Indeed, although visually obvious to humans, any distance metric would have difficulty recognizing the two annuli when they are sufficiently close to each other.

Applying \DGC\ with the input $\textbf{x}$ being the 2D Cartesian point coordinates and the responses $\textbf{y}$ being the response values described previously, we achieve training/testing clustering accuracies of 100\%
, distinguishing the two annulli wholly based on the discriminative information carried by the responses. Further, Fig.~\ref{fig:pacman} (second row) shows that the generated samples from \DGC\ can recover the structure of the original data. Both the Pacman shape and its corresponding response values are captured.

The following is worth noting. Firstly, the generated samples from \DGC\ substantiate the model's ability to appropriately learn and use the side-information provided by the response values to obtain a sensible clustering strategy. Secondly, unlike most previously discussed methods, \DGC\ can work with continuous response values. This is highly attractive, as it lends itself to any general regression setting in which one would believe the desired clustering should be informed by the regression task. 

\subsection{SVHN}

\begin{wraptable}[12]{r}{0.5\linewidth}
\vspace{-6.3mm}
  \begin{center}
    \caption{This table depicts how the testing classification and clustering accuracies vary with respect to the desired number of clusters.}
    \label{table:svhn_table}
    \small
    \begin{tabular}{l|c|c} 
      Number of Clusters & Classification & Clustering\\ 
      \hline
      1 Cluster & 92.7\% & 19.6\% \\
      \hline
      5 Clusters & 92.6\% & 57.8\% \\
      \hline
      \textbf{10 Clusters} & \textbf{94.7}\pmb{\%} & \textbf{92.6}\pmb{\%} \\
      \hline
      15 Clusters & 92.0\%  & 92.0\% \\
      \hline
      20 Clusters & 92.2\% & 91.9\% \\
      \hline
      50 Clusters & 91.3\%  & 87.9\% \\
      \hline
      100 Clusters & 89.3\%  & 84.1\% \\
    \end{tabular}
  \end{center}
\end{wraptable}
We apply \DGC\ to the Street View House Number (SVHN) dataset \cite{Netzer2011}. This dataset consists of 73,257 training images, 26,032 test images, and 531,131 additional training images (that are easier than the ones in the training set). We follow the standard procedure to pre-process the images \cite{Zagoruyko2016,Devries2017}, only normalizing them so that the pixel values are within the [0,1] range. We train our model using both the training images and the extra images, and do not use any data augmentation techniques. For the purpose of clustering, we use the class labels (of the digits) as the ground truth clustering labels to calculate the clustering accuracy. Applying \DGC\ on SVHN with the input $\textbf{x}$ being the street view images and the output $\textbf{y}$ being the class labels of the underlying SVHN digits, we achieve a final classification accuracy of 94.7\% and clustering accuracy of 92.6\%. 

Results show the following. Firstly, although we use the class labels as the responses to help the clustering, we only assume the \emph{number} of clusters. This is to say, \DGC\ has to learn the provided labels from scratch, and decide for itself how to group the images in the latent space based on the supervised and the unsupervised signals. With the help of the supervision, \DGC\ achieves, as expected, much higher clustering accuracy than the state-of-the-art unsupervised clustering approaches on SVHN (76.8\%)~\cite{Kilinc2018}. Secondly, the desired number of clusters is a hyperparameter in {\DGC}. We investigate how sensitive \DGC\ is with respect to this hyperparameter. Table \ref{table:svhn_table} shows that the classification accuracy is the highest when the desired number of clusters is 10. This aligns with our expectation as we know a priori that there are 10 underlying digits in this dataset. 

When the desired number of clusters is different from 10, we can still calculate the clustering accuracy with the caveat that the upper bound for the accuracy may not be 100\%. For instance, if we set the desired number of clusters to 1, then the clustering accuracy is bounded by the percentage of the largest cluster, whereas, if we set the desired number of clusters to be larger than the number of digits, the network can still achieve a clustering accuracy of 100\% if it learns to consistently group samples into a consistent set of clusters, with the cardinality of the set matching the number of digits. 
This is echoed by the clustering accuracies in Table~\ref{table:svhn_table}. When the desired number of clusters exceeds 10, \DGC\ is still able to achieve high clustering accuracies, hinting at the network's ability, when given enough flexibility, to learn to choose an appropriate number of clusters.

\section{Conclusion}

\todo{This conclusion needs to be strengthened. In particular, infuse it with a bit more excitement and discuss some exciting avenues for future research.}
In this work, we introduced {\DGC}, a probabilistic framework that allows for the integration of both supervised and unsupervised information when searching for a congruous clustering in the latent space.
This is an extremely relevant, but daunting task, where previous attempts are either largely restricted to discrete, supervised, ground-truth labels or rely heavily on the side-information being provided as manually tuned constraints.
To the best of our knowledge, this is the first attempt to simultaneously learn from generally indirect, but informative side-information and form a sensible clustering strategy, all the while making minimal assumptions on either the form of the supervision or the relationship between the supervision and the clusters.
This method is  applicable to a variety of fields where an instance's input and task are defined but its membership is important and unknown, e.g., survival analysis.
Training the model in an end-to-end fashion, we demonstrate that \DGC\ is capable of capturing a clustering that aligns with the provided information, while obtaining reasonable classification results on various datasets.

\section*{Broader Impact}
\todo{This is a bit lengthy. The general sentiment is fine, but I would cut this by at least 50\% and just make it very concise and to the point.}
As hinted in the introduction, this work is motivated by the ambiguous\todo{I am really not buying this ambiguous argument. It is what you ask for. Your case might be a bit better because you start from a probabilistic model, but in general you write down a model and your clustering is then a reflection of the mode you chose. But there is nothing inherently ambiguous about. E.g., if you do k-mean clustering you know exactly what you are asking for and this is what you are going to get. CMB: perhaps ambiguity is misleading, is arbitrary a better word? you'll get different clusters and memberships even for k-means if you use an L1 distance vs an L2 distance? unlike the supervised problem, where we minimize some distance to a fixed point (the known points), our reference floats and the final distribution of subspaces is completely governed by our initial choice.} nature of clustering that is especially apparent in the field of scientific discovery, where researchers in general do not have a priori knowledge what grouping is more appropriate when multiple groupings exist\todo{Does your model really help with this?---YS: Our model is to help with this situation with an available response variable right?}. As a concrete example, in the field of breast cancer research, the problem of finding a group structure among patients that is meaningful in explaining their corresponding survival rates has attracted increasing attention in recent years. However, the final measurement, i.e. the measurement that the clustering is hoped to meaningfully explain, usually has no impact on the clustering procedure itself. Further, this approach requires a pre-defined notion of ``satisfactory clustering,'' which, in a way, is contradictory to the very concept of scientific discovery in which one would want as little human intervention as possible. In this work, we address both problems at once: we let the supervision (i.e. the final measurement) directly affect the latent space in which the clustering is performed, and more importantly, we let the network define and learn an appropriate clustering instead of pre-defining \todo{Don't you totally pre-define this notion as well as you make model assumptions?} such a notion. Therefore, we believe researchers who work in scientific discovery in general will benefit from this work. 

\section*{Acknowledgement}
Research detailed in this work was supported by the National Science Foundation (NSF) under award numbers NSF EECS-1711776 and NSF EECS-1610762.

{
\bibliographystyle{abbrv}
\bibliography{references}
}
\clearpage

\begin{appendices}
\section{Theoretical Derivations}
This section provides detailed derivations for the theoretical claims made in the main manuscript. 
\begin{theo}
The variational lower bound for {\DGC} is 
	\begin{equation} 
        \begin{split}
        \log p(\textbf{x},\textbf{y}) 
        &\geq \underbrace{\mathbb{E}_{q(\textbf{z},c|\textbf{x})}\log p(\textbf{y}|\textbf{z},c)}_{\text{Probabilistic Ensemble}} + \underbrace{\mathbb{E}_{q(\textbf{z},c|\textbf{x})}\log \frac{p(\textbf{x},\textbf{z},c)}{q(\textbf{z},c|\textbf{x})}}_\text{ELBO for \VAE\ with GMM prior} = \mathcal{L}_{\text{ELBO}}\,.
        \end{split}
    \end{equation}
\end{theo}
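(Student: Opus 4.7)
The plan is to use a standard variational argument: introduce the joint variational posterior $q(\textbf{z},c|\textbf{x})$, apply Jensen's inequality to bring the logarithm inside an expectation, and then exploit the factorization of $p(\textbf{x},\textbf{y},\textbf{z},c)$ dictated by the Bayesian network in Fig.~\ref{fig:bayesian_net} to split the bound into the two advertised terms. There is nothing deep here; the derivation is essentially the analogue of the standard \VAE{}/\VaDE{} ELBO derivation, augmented by the response variable $\textbf{y}$.

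Concretely, I would proceed as follows. First, write the marginal as
\begin{equation*}
\log p(\textbf{x},\textbf{y}) = \log \sum_{c}\int p(\textbf{x},\textbf{y},\textbf{z},c)\, d\textbf{z},
\end{equation*}
and then multiply and divide by the variational posterior $q(\textbf{z},c|\textbf{x})$ inside the integrand to obtain
\begin{equation*}
\log p(\textbf{x},\textbf{y}) = \log \mathbb{E}_{q(\textbf{z},c|\textbf{x})}\!\left[\frac{p(\textbf{x},\textbf{y},\textbf{z},c)}{q(\textbf{z},c|\textbf{x})}\right].
\end{equation*}
Second, apply Jensen's inequality (using concavity of $\log$) to pull the expectation outside:
\begin{equation*}
\log p(\textbf{x},\textbf{y}) \geq \mathbb{E}_{q(\textbf{z},c|\textbf{x})}\log \frac{p(\textbf{x},\textbf{y},\textbf{z},c)}{q(\textbf{z},c|\textbf{x})}.
\end{equation*}

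Third, invoke the generative factorization assumed in Section~4.1,
\begin{equation*}
p(\textbf{x},\textbf{y},\textbf{z},c) = p(\textbf{y}|\textbf{z},c)\, p(\textbf{x}|\textbf{z})\, p(\textbf{z}|c)\, p(c) = p(\textbf{y}|\textbf{z},c)\, p(\textbf{x},\textbf{z},c),
\end{equation*}
where the conditional-independence assumption $p(\textbf{x},\textbf{y}|\textbf{z}) = p(\textbf{x}|\textbf{z})\, p(\textbf{y}|\textbf{z})$ combined with the explicit $c$-dependence of $p(\textbf{y}|\textbf{z},c)$ justifies treating $\textbf{y}$'s contribution as a separable factor in $p(\textbf{x},\textbf{y},\textbf{z},c)/p(\textbf{x},\textbf{z},c)$. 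Substituting into the bound and splitting the logarithm of the product into a sum then immediately yields
\begin{equation*}
\log p(\textbf{x},\textbf{y}) \geq \mathbb{E}_{q(\textbf{z},c|\textbf{x})}\log p(\textbf{y}|\textbf{z},c) + \mathbb{E}_{q(\textbf{z},c|\textbf{x})}\log \frac{p(\textbf{x},\textbf{z},c)}{q(\textbf{z},c|\textbf{x})},
\end{equation*}
which is exactly $\mathcal{L}_{\text{ELBO}}$.

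There is no real obstacle in this derivation; it is almost entirely bookkeeping. The only point that deserves a brief justification is why the factor $p(\textbf{y}|\textbf{z},c)$ can be cleanly extracted as the supervised term while the remainder $p(\textbf{x},\textbf{z},c)$ exactly reproduces the \VaDE{} joint, and this follows directly from the conditional independence assumption stated in Section~3.2 together with the Bayesian network in Fig.~\ref{fig:bayesian_net}. I would conclude by remarking that the second term coincides with the \VaDE{} ELBO (as the notation already anticipates), so the bound decomposes transparently into an unsupervised \VaDE{}-style component and a supervised predictive component, consistent with the interpretation given immediately after the statement.
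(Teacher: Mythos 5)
Your proposal is correct and follows essentially the same route as the paper's own proof: introduce $q(\textbf{z},c|\textbf{x})$, apply Jensen's inequality, and use the generative factorization $p(\textbf{x},\textbf{y},\textbf{z},c)=p(\textbf{y}|\textbf{z},c)\,p(\textbf{x},\textbf{z},c)$ to split the bound into the supervised and \VaDE{}-style terms. No gaps; the additional remark about why $p(\textbf{y}|\textbf{z},c)$ separates cleanly is a welcome clarification the paper leaves implicit.
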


\begin{proof}
We derive the $\mathcal{L}_{\text{ELBO}}$ as follows
	\begin{equation}
        \begin{split}
        \log p(\textbf{x},\textbf{y}) &= \log \int_\textbf{z} \sum_c p(\textbf{x},\textbf{y},\textbf{z},c) d\textbf{z}\\
        &= \log \int_\textbf{z} \sum_c \frac{p(\textbf{x},\textbf{y},\textbf{z},c)}{q(\textbf{z},c|\textbf{x})}q(\textbf{z},c|\textbf{x}) d\textbf{z}  \\
        &\geq \mathbb{E}_{q(\textbf{z},c|\textbf{x})}\log \frac{p(\textbf{x},\textbf{y},\textbf{z},c)}{q(\textbf{z},c|\textbf{x})}\\ 
        &= \underbrace{\underbrace{\mathbb{E}_{q(\textbf{z},c|\textbf{x})}\log p(\textbf{y}|\textbf{z},c)}_{\text{Probabilistic Ensemble}} + \underbrace{\mathbb{E}_{q(\textbf{z},c|\textbf{x})}\log \frac{p(\textbf{x},\textbf{z},c)}{q(\textbf{z},c|\textbf{x})}}_\text{ELBO for VAE with GMM prior}}_{\pmb{\mathcal{L}}_{\text{ELBO}}}\,.
        \end{split}
    \end{equation}
\end{proof}

\begin{prop}
To explain the fact that choosing $q(\textbf{z}|\textbf{x})$ to be unimodal will not incur a sizable information loss when the learned $q(c|\textbf{x})$ is discriminative, we dissect the $\mathcal{L}_{\text{ELBO}}$ as follows (Eq.3 in the main paper)
\begin{equation} 
\begin{aligned}
    \mathcal{L}_{\text{ELBO}} = \mathbb{E}_{q(\textbf{z},c|\textbf{x})}\log p(\textbf{y}|\textbf{z},c) + & \mathbb{E}_{q(\textbf{z}|\textbf{x})}\log p(\textbf{x}|\textbf{z}) \\
    & - \mathbb{KL}\left(q(c|\textbf{x})||p(c)\right) - \sum_{k}\lambda_k\mathbb{KL}\left(q(\textbf{z}|\textbf{x})||p(\textbf{z}|c=k)\right)\,.
\end{aligned}
\end{equation}
\end{prop}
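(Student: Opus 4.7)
The plan is to start from the expression for $\mathcal{L}_{\text{ELBO}}$ already established in Theorem~1 and simply rewrite the second summand (the ``ELBO for VAE with GMM prior'' piece). The first summand, $\mathbb{E}_{q(\textbf{z},c|\textbf{x})}\log p(\textbf{y}|\textbf{z},c)$, is already in the desired form, so it will be carried through unchanged.

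First I would expand the joint $p(\textbf{x},\textbf{z},c)$ using the factorization from the generative model described in Section~4.1, namely $p(\textbf{x},\textbf{z},c) = p(\textbf{x}|\textbf{z})\,p(\textbf{z}|c)\,p(c)$, and likewise apply the mean-field assumption $q(\textbf{z},c|\textbf{x}) = q(\textbf{z}|\textbf{x})\,q(c|\textbf{x})$. Substituting these into the log-ratio and splitting it by linearity of the logarithm yields four pieces:
\begin{equation*}
\log\frac{p(\textbf{x},\textbf{z},c)}{q(\textbf{z},c|\textbf{x})} = \log p(\textbf{x}|\textbf{z}) + \log\frac{p(\textbf{z}|c)}{q(\textbf{z}|\textbf{x})} + \log\frac{p(c)}{q(c|\textbf{x})}.
\end{equation*}

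Next I would take the expectation $\mathbb{E}_{q(\textbf{z},c|\textbf{x})}[\cdot]$ of each piece and marginalize out the variable on which the integrand does not depend. For the first piece, $\log p(\textbf{x}|\textbf{z})$ is independent of $c$, so summing out $c$ against $q(c|\textbf{x})$ gives $\mathbb{E}_{q(\textbf{z}|\textbf{x})}\log p(\textbf{x}|\textbf{z})$. For the third piece, $\log\frac{p(c)}{q(c|\textbf{x})}$ is independent of $\textbf{z}$, so integrating out $\textbf{z}$ yields $-\mathbb{KL}\!\left(q(c|\textbf{x})\,\|\,p(c)\right)$. For the middle piece, I would use the mean-field factorization to write
\begin{equation*}
\mathbb{E}_{q(\textbf{z},c|\textbf{x})}\log\frac{p(\textbf{z}|c)}{q(\textbf{z}|\textbf{x})} = \sum_{k}\lambda_k\,\mathbb{E}_{q(\textbf{z}|\textbf{x})}\log\frac{p(\textbf{z}|c=k)}{q(\textbf{z}|\textbf{x})} = -\sum_{k}\lambda_k\,\mathbb{KL}\!\left(q(\textbf{z}|\textbf{x})\,\|\,p(\textbf{z}|c=k)\right),
\end{equation*}
with $\lambda_k = q(c=k|\textbf{x})$ as defined in the main text. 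Assembling the three resulting terms with the unchanged supervised term gives exactly the claimed decomposition.

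This is essentially a bookkeeping exercise, but the only point at which I would be careful is keeping track of which factor carries $c$ and which carries $\textbf{z}$ after applying the mean-field factorization, since the cross term $\sum_k \lambda_k \mathbb{KL}(q(\textbf{z}|\textbf{x})\,\|\,p(\textbf{z}|c=k))$ is a convex combination of per-component KL divergences rather than a single KL against the full mixture prior $p(\textbf{z})$. No additional assumption beyond the mean-field posterior and the generative factorization stated in Section~4.1 is needed, so no genuine obstacle is expected.
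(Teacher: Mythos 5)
Your proposal is correct and follows essentially the same route as the paper's own proof: factor $p(\textbf{x},\textbf{z},c)$ and $q(\textbf{z},c|\textbf{x})$, split the log-ratio, marginalize out the irrelevant variable in each term, and recognize the cross term as a $\lambda_k$-weighted combination of per-component KL divergences. In fact your sign for that last term, $-\sum_k\lambda_k\mathbb{KL}\left(q(\textbf{z}|\textbf{x})\,\|\,p(\textbf{z}|c=k)\right)$, is the correct one needed to match the stated proposition, whereas the paper's appendix contains a small sign typo at the corresponding step (the only other blemish in your write-up is calling three pieces ``four,'' which is immaterial).
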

\begin{proof}
	\begin{equation}
        \begin{split}
        \mathcal{L}_{\text{ELBO}} &=\mathbb{E}_{q(\textbf{z},c|\textbf{x})}\log p(\textbf{y}|\textbf{z},c) +\mathbb{E}_{q(\textbf{z},c|\textbf{x})}\log \frac{p(\textbf{x},\textbf{z},c)}{q(\textbf{z},c|\textbf{x})}\\
        &= \mathbb{E}_{q(\textbf{z},c|\textbf{x})}\log p(\textbf{y}|\textbf{z},c) +\mathbb{E}_{q(\textbf{z},c|\textbf{x})}\log \frac{p(\textbf{x}|\textbf{z})p(\textbf{z}|c)p(c)}{q(\textbf{z}|\textbf{x})q(c|\textbf{x})}\\
        &= \mathbb{E}_{q(\textbf{z},c|\textbf{x})}\log p(\textbf{y}|\textbf{z},c) +\mathbb{E}_{q(\textbf{z}|\textbf{x})}\log p(\textbf{x}|\textbf{z}) - \mathbb{KL}\left(q(c|\textbf{x})||p(c)\right) +  \mathbb{E}_{q(\textbf{z},c|\textbf{x})}\log \frac{p(\textbf{z}|c)}{q(\textbf{z}|\textbf{x})}
        \end{split}
    \end{equation}
where 
$$\mathbb{E}_{q(\textbf{z},c|\textbf{x})}\log\frac{p(\textbf{z}|c)}{q(\textbf{z}|\textbf{x})} = \mathbb{E}_{q(c|\textbf{x})}\mathbb{E}_{q(\textbf{z}|\textbf{x})}\log \frac{p(\textbf{z}|c)}{q(\textbf{z}|\textbf{x})} = \sum_{k}\lambda_k\mathbb{KL}\left(q(\textbf{z}|\textbf{x})||p(\textbf{z}|c=k)\right)$$
where $\lambda_k = q(c=k|\textbf{x})$.
\end{proof}
\bigskip

\begin{prop}
Choosing $q(c|\textbf{x})$ requires us to decompose $\mathcal{L}_{\text{ELBO}}$ as follows
	\begin{equation}
        \begin{split}
        \mathcal{L}_{\text{ELBO}} &= \mathbb{E}_{q(\textbf{z},c|\textbf{x})}\log p(\textbf{y}|\textbf{z},c) + \mathbb{E}_{q(\textbf{z}|\textbf{x})}\log \frac{p(\textbf{x},\textbf{z})}{q(\textbf{z}|\textbf{x)}} -  \mathbb{E}_{q(\textbf{z}|\textbf{x})} \mathbb{KL}\left(q(c|\textbf{x})||p(c|\textbf{z})\right).
        \end{split}
    \end{equation}
\end{prop}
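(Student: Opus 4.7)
The plan is to start from the already-established form of $\mathcal{L}_{\text{ELBO}}$ (from Theorem 1) and manipulate only the second term, since the probabilistic-ensemble term $\mathbb{E}_{q(\textbf{z},c|\textbf{x})}\log p(\textbf{y}|\textbf{z},c)$ already appears in the target decomposition. So the whole task reduces to showing
\[
\mathbb{E}_{q(\textbf{z},c|\textbf{x})}\log \frac{p(\textbf{x},\textbf{z},c)}{q(\textbf{z},c|\textbf{x})} \;=\; \mathbb{E}_{q(\textbf{z}|\textbf{x})}\log \frac{p(\textbf{x},\textbf{z})}{q(\textbf{z}|\textbf{x})} \;-\; \mathbb{E}_{q(\textbf{z}|\textbf{x})}\mathbb{KL}\!\left(q(c|\textbf{x})\,\|\,p(c|\textbf{z})\right).
\]

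First I would rewrite the joint $p(\textbf{x},\textbf{z},c)$ in a way that isolates $c$. From the Bayesian network in Fig.~\ref{fig:bayesian_net}, marginalizing over $\textbf{y}$ gives $p(\textbf{x},\textbf{z},c) = p(\textbf{x}|\textbf{z})p(\textbf{z}|c)p(c) = p(\textbf{x}|\textbf{z})p(\textbf{z},c)$. Using the identity $p(\textbf{z},c) = p(\textbf{z})p(c|\textbf{z})$ and $p(\textbf{x},\textbf{z}) = p(\textbf{x}|\textbf{z})p(\textbf{z})$, this becomes $p(\textbf{x},\textbf{z},c) = p(\textbf{x},\textbf{z})\,p(c|\textbf{z})$. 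At the same time, the mean-field assumption gives $q(\textbf{z},c|\textbf{x}) = q(\textbf{z}|\textbf{x})q(c|\textbf{x})$. Substituting both into the log-ratio yields
\[
\log \frac{p(\textbf{x},\textbf{z},c)}{q(\textbf{z},c|\textbf{x})} = \log \frac{p(\textbf{x},\textbf{z})}{q(\textbf{z}|\textbf{x})} + \log \frac{p(c|\textbf{z})}{q(c|\textbf{x})}.
\]

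Next I would take the expectation under $q(\textbf{z},c|\textbf{x})$ of each piece. The first summand depends only on $\textbf{z}$, so the expectation collapses to $\mathbb{E}_{q(\textbf{z}|\textbf{x})}\log\frac{p(\textbf{x},\textbf{z})}{q(\textbf{z}|\textbf{x})}$. For the second summand, I would use $\mathbb{E}_{q(\textbf{z},c|\textbf{x})} = \mathbb{E}_{q(\textbf{z}|\textbf{x})}\mathbb{E}_{q(c|\textbf{x})}$ and recognize the inner expectation as the negative KL divergence: $\mathbb{E}_{q(c|\textbf{x})}\log\frac{p(c|\textbf{z})}{q(c|\textbf{x})} = -\mathbb{KL}(q(c|\textbf{x})\,\|\,p(c|\textbf{z}))$. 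Combining the two pieces and adding back the ensemble term produces exactly the claimed decomposition.

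There isn't really a hard step here; this is a straightforward algebraic rearrangement. The only point that requires care is the factorization $p(\textbf{x},\textbf{z},c) = p(\textbf{x},\textbf{z})p(c|\textbf{z})$, which tacitly uses the conditional independence $c \perp \textbf{x} \mid \textbf{z}$ read off the Bayesian network; I would flag this explicitly so the reader sees why we can pull $p(c|\textbf{z})$ out cleanly (rather than having a residual $p(c|\textbf{x},\textbf{z})$). Everything else is linearity of expectation and the definition of $\mathbb{KL}$.
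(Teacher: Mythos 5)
Your proof is correct and follows essentially the same route as the paper's: factor $p(\textbf{x},\textbf{z},c)$ so that a $p(c|\textbf{z})$ term splits off, use the mean-field factorization of $q$, and collect the $c$-dependent piece into a KL divergence. In fact you are slightly more careful than the paper, which writes $p(\textbf{x},\textbf{z},c)=p(c|\textbf{x},\textbf{z})p(\textbf{x},\textbf{z})$ and then silently replaces $p(c|\textbf{x},\textbf{z})$ by $p(c|\textbf{z})$, whereas you explicitly justify this via the conditional independence $c\perp\textbf{x}\mid\textbf{z}$ implied by the generative model.
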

\begin{proof}
 	\begin{equation}
        \begin{split}
        \pmb{\mathcal{L}}_{\text{ELBO}} &= \mathbb{E}_{q(\textbf{z},c|\textbf{x})}\log p(\textbf{y}|\textbf{z},c) + \mathbb{E}_{q(\textbf{z},c|\textbf{x})}\log \frac{p(\textbf{x},\textbf{z},c)}{q(\textbf{z},c|\textbf{x})}\\
        &= \mathbb{E}_{q(\textbf{z},c|\textbf{x})}\log p(\textbf{y}|\textbf{z},c) + \mathbb{E}_{q(\textbf{z},c|\textbf{x})}\log \frac{p(c|\textbf{x},\textbf{z})p(\textbf{x},\textbf{z})}{q(\textbf{z}|\textbf{x})q(c|\textbf{x})}\\
        &= \mathbb{E}_{q(\textbf{z},c|\textbf{x})}\log p(\textbf{y}|\textbf{z},c) + \mathbb{E}_{q(\textbf{z}|\textbf{x})}\mathbb{E}_{q(c|\textbf{x})}\left[\log \frac{p(\textbf{x},\textbf{z})}{q(\textbf{z}|\textbf{x})}-\log\frac{q(c|\textbf{x})}{p(c|\textbf{z})}\right]\\
        &=\mathbb{E}_{q(\textbf{z},c|\textbf{x})} \log p(\textbf{y}|\textbf{z},c) + \mathbb{E}_{q(\textbf{z}|\textbf{x})}\log \frac{p(\textbf{x},\textbf{z})}{q(\textbf{z}|\textbf{x)}} - \mathbb{E}_{q(\textbf{z}|\textbf{x})} \mathbb{KL}\left(q(c|\textbf{x})||p(c|\textbf{z})\right)
        \end{split}
    \end{equation}
\end{proof}
\smallskip

\begin{prop}
The solution to the following convex program
\begin{equation} \label{opti_a}
\begin{aligned}
\min_{q(c|\textbf{x})} \quad & f_0(q) =  \mathbb{KL}\left(q(c|\textbf{x})||p(c|\textbf{z})\right) -  \mathbb{E}_{q(c|\textbf{x})} \log p(\textbf{y}|\textbf{z},c)\,,\\
\textrm{s.t.} \quad & \sum_k q(c=k|\textbf{x}) = 1,\quad
  q(c=k|\textbf{x}) \geq 0,\ \ \forall k\,
\end{aligned}
\end{equation}
is 
\begin{equation} 
  q(c=k|\textbf{x})  = \frac{p(\textbf{y}|\textbf{z},c=k)\cdot p(c=k|\textbf{z})}{\sum_k p(\textbf{y}|\textbf{z},c=k)\cdot p(c=k|\textbf{z})}\,. 
\end{equation}
\end{prop}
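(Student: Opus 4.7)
The plan is to recognize that the two terms in $f_0$ combine into a single Kullback–Leibler divergence (up to an additive constant), after which the minimizer is immediate. Write $q_k := q(c=k|\textbf{x})$ and define the unnormalized measure $\tilde{r}_k := p(\textbf{y}|\textbf{z},c=k)\,p(c=k|\textbf{z})$ with normalizer $Z := \sum_k \tilde{r}_k$. Observe that
\begin{equation*}
f_0(q) = \sum_k q_k \log \frac{q_k}{p(c=k|\textbf{z})} - \sum_k q_k \log p(\textbf{y}|\textbf{z},c=k) = \sum_k q_k \log \frac{q_k}{\tilde{r}_k}.
\end{equation*}
Dividing numerator and denominator inside the log by $Z$ gives $f_0(q) = \mathbb{KL}\bigl(q\,\|\,r\bigr) - \log Z$, where $r_k := \tilde{r}_k / Z$ is a genuine probability distribution and $\log Z$ is a constant independent of $q$.

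Having reduced the problem to minimizing a KL divergence over the probability simplex, I would invoke Gibbs' inequality: $\mathbb{KL}(q\,\|\,r)\geq 0$ with equality iff $q=r$ pointwise. The constraint $\sum_k q_k = 1$ is already enforced, and the nonnegativity constraint is automatically satisfied at $q=r$ since each $\tilde{r}_k \geq 0$. Substituting $q_k = r_k$ yields exactly the claimed formula.

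As an alternative verification (useful because the paper treats $f_0$ as a Lagrangian target), I would also sketch the Lagrange multiplier derivation: introduce a single multiplier $\nu$ for the equality constraint, note that the inequality constraints are inactive at the optimum (the $q_k\log q_k$ term in the KL divergence makes $\partial f_0/\partial q_k \to -\infty$ as $q_k \to 0^+$, so no boundary minimum is possible), differentiate to obtain $\log q_k - \log \tilde{r}_k + 1 + \nu = 0$, and solve for $\nu$ using $\sum_k q_k = 1$. This recovers the same closed form and, combined with strict convexity of $f_0$ on the open simplex (the KL divergence is strictly convex in its first argument), certifies that the stationary point is the unique global minimizer.

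The only subtle step is justifying that boundary points $q_k = 0$ need not be handled separately; I would handle this by the infinite-slope argument above (or equivalently by noting that $\mathbb{KL}(q\,\|\,r)$ is minimized on the relative interior whenever $r$ itself lies in the relative interior, which holds provided each $p(\textbf{y}|\textbf{z},c=k)$ and $p(c=k|\textbf{z})$ is strictly positive, a mild and standard assumption). Everything else is routine calculation.
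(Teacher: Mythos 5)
Your proposal is correct, and your primary argument takes a genuinely different route from the paper's. The paper proceeds by brute force through the KKT conditions: it forms the Lagrangian for the equality constraint, differentiates $f_0$ term by term to get $\log\frac{t_k}{p(c=k|\textbf{x})} + 1 - \log p(\textbf{y}|\textbf{z},c=k) + \gamma^* = 0$, solves for $t_k^*$, and normalizes away the multiplier. You instead absorb the cross-entropy term into the KL divergence, rewriting $f_0(q) = \mathbb{KL}\left(q\,\|\,r\right) - \log Z$ with $r_k \propto p(\textbf{y}|\textbf{z},c=k)\,p(c=k|\textbf{z})$, and then read off the minimizer from Gibbs' inequality. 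Your route is shorter and strictly more informative: it certifies global optimality and uniqueness directly (the paper's stationarity argument implicitly leans on convexity, which is only asserted in the main text, and never addresses whether the nonnegativity constraints could be active), it exhibits the optimal value $-\log Z = -\log \sum_k p(\textbf{y}|\textbf{z},c=k)\,p(c=k|\textbf{z})$ for free, and it makes transparent why the answer is a Bayes-rule-like posterior reweighting. Your secondary Lagrangian sketch essentially reproduces the paper's proof, and your explicit handling of the boundary (the infinite-slope argument, or positivity of $r$) addresses a point the paper waves away with the remark that the KL term is ``not defined otherwise.'' The only mild caveat is the one you already flag yourself: the Gibbs argument needs $\tilde{r}_k > 0$ for all $k$ so that $r$ lies in the relative interior of the simplex, which is a standard assumption here.
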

\begin{proof}
First, we note that the constraint, $q(c=k|\textbf{x}) \geq 0$ for all $k$, is not needed (and effectively redundant), as the $\mathbb{KL}$ term in the objective function is not defined otherwise. Now consider a convex program that takes the form of
\begin{equation}
\begin{aligned}
\min_{\textbf{t}\in \mathbb{R}_{+}^k} \quad & f_0(\textbf{t}) \\
\textrm{s.t.} \quad & 
  \textbf{1}^T \textbf{t} = 1\,.
\end{aligned}
\end{equation}
where $f_0$ is a convex function and ``$\succeq$'' denotes ``element-wise greater than or equal to.'' Forming the Lagrangian, we have
$$\textbf{L}\left(\textbf{t}, \gamma\right) = f_0(\textbf{t}) + \gamma \left(\textbf{1}^T \textbf{t}-1\right)$$
The \textit{Karush–Kuhn–Tucker conditions} state that the optimal solution dual, $(\textbf{t}^*, \gamma^*)$, satisfies the following
\begin{itemize}
    \item $-\textbf{t}^* \preceq 0$
    \item $\textbf{1}^T \textbf{t}^*-1 =0$
    \item $\nabla_{\textbf{t}}\textbf{L}\left(\textbf{t}^*,\gamma^*\right) = 0$
\end{itemize}
Since
$$\nabla_{\textbf{t}} \textbf{L}\left(\textbf{t}, \gamma\right) = \nabla_{\textbf{t}} f_0(\textbf{t}) + \gamma\cdot \textbf{1}$$
the third condition implies that
\begin{equation} \label{condi}
 \nabla_{\textbf{t}}\textbf{L}\left(\textbf{t}^*, \gamma^*\right)  =  \nabla_{\textbf{t}} f_0(\textbf{t}^*) + \gamma^*\cdot \textbf{1} = 0\,.   
\end{equation}
Let $\textbf{t} = q(c|\textbf{x})$ $\left(\text{i.e.}\ t_k = q(c=k|\textbf{x})\right)$, and $f_0(\textbf{t})$ as being specified in Eq.~\ref{opti_a}, we have
 	\begin{equation}
        \begin{split}
        \nabla_{t_k} f_0(\textbf{t}) &= \frac{\partial}{\partial t_k}\left(\sum_{k} t_k \log \frac{t_k}{p(c=k|\textbf{z})} - \sum_k t_k \log p(\textbf{y}|\textbf{z},c=k)\right)\\
        &= \log \frac{t_k}{p(c=k|\textbf{z})} + 1 - \log p(\textbf{y}|\textbf{z},c=k)\,.
        \end{split}
    \end{equation}
Based on the condition in Eq.~\ref{condi}, we thus have
$$\nabla_{t_k}\textbf{L}\left(\textbf{t}^*, \gamma^*\right)  = \log \frac{t^*_k}{p(c=k|\textbf{z})} + 1 - \log p(\textbf{y}|\textbf{z},c=k) + \gamma^* = 0$$
which leads to
    $$t^*_k = e^{\log p(\textbf{y}|\textbf{z},c=k) - 1 - \gamma^*} \cdot p(c=k|\textbf{z})\,.$$

Since $\gamma^*$ is chosen in a way such that $\sum_k t^*_k = 1$ (by the second condition), we obtain the solution 
\begin{equation} \label{solu}
    t^*_k = \frac{t^*_k}{\sum_k t^*_k} = \frac{p(\textbf{y}|\textbf{z},c=k)\cdot p(c=k|\textbf{z})}{\sum_k p(\textbf{y}|\textbf{z},c=k)\cdot p(c=k|\textbf{z})}\,.
\end{equation}
\end{proof}

\section{Experimental Details}
This section provides a detailed description of the experimental setups, such as the train/test splits, the chosen network architectures, and the choices of learning rate and optimizer, for the experiments conducted. We describe the architecture of {\DGC} in terms of its encoder, decoder, and task network. We adopt the following abbreviations for some basic network layers
\begin{itemize}
    \item \FL($d_i,d_o,f$) denotes a fully-connected layer with $d_i$ input units, $d_o$ output units, and activation function $f$.
    \item \Conv$\left(c_i, c_o, k_1, f, \batch, O(k_2,s) \right)$ denotes a convolution layer with $c_i$ input channels, $c_o$ output channels, kernel size $k_1$, activation function $f$, and pooling operation $O(k_2,s)$ with another kernel size $k_2$ and stride $s$.
\end{itemize} 

\subsection{Noisy MNIST}
We extract images that correspond to the digits 2 and 7 from MNIST. The MNIST dataset is pre-divided into training/testing sets,  so we naturally use the images that correspond to the digits 2 and 7 from the training set as our training data (12,223 images), and that from the testing set as our testing data (2,060 images). For each digit, we randomly select half of the images for that digit and superpose noisy backgrounds onto those images, where the backgrounds are cropped from randomly selected CIFAR-10 images (more specifically, we first randomly select a class, and then randomly select a CIFAR image that corresponds to that class). 

We use the \texttt{Adam} optimizer for optimization. We train with a batch size of 128 images, an initial learning rate of 0.002, and a learning rate decay of 10\% after every 10 epochs, for 100 epochs. 
We use the following network architecture:
\begin{center}
 \label{tab:MNIST}
 \begin{tabular}{||c||} 
 \hline
 \textbf{Encoder}  \\ [0.5ex] 
 \hline\hline
  \FL(784,500,\ReLU) \\ 
 \hline
  \FL(500,500,\ReLU) \\ 
 \hline
  \FL(500,2000,\ReLU) \\
 \hline
  \FL(2000,10,\ReLU) \\
 \hline
\end{tabular}
\quad
 \begin{tabular}{||c||} 
 \hline
 \textbf{Decoder}  \\ [0.5ex] 
 \hline\hline
  \FL(10, 2000,\ReLU) \\ 
 \hline
  \FL(200,500,\ReLU) \\ 
 \hline
  \FL(500,500,\ReLU) \\
 \hline
  \FL(500,784,\ReLU) \\
 \hline
\end{tabular}
\quad
 \begin{tabular}{||c||} 
 \hline
 \textbf{Task Network}  \\ [0.5ex] 
 \hline
  \FL(10, 4,\sig) \\ 
 \hline
\end{tabular}
\end{center}

\subsection{Pacman}

This section provides more details for our Pacman experiments. 

\begin{figure}[h]
    \centering 
\begin{subfigure}{0.35\textwidth}
  \includegraphics[width=1.3\linewidth]{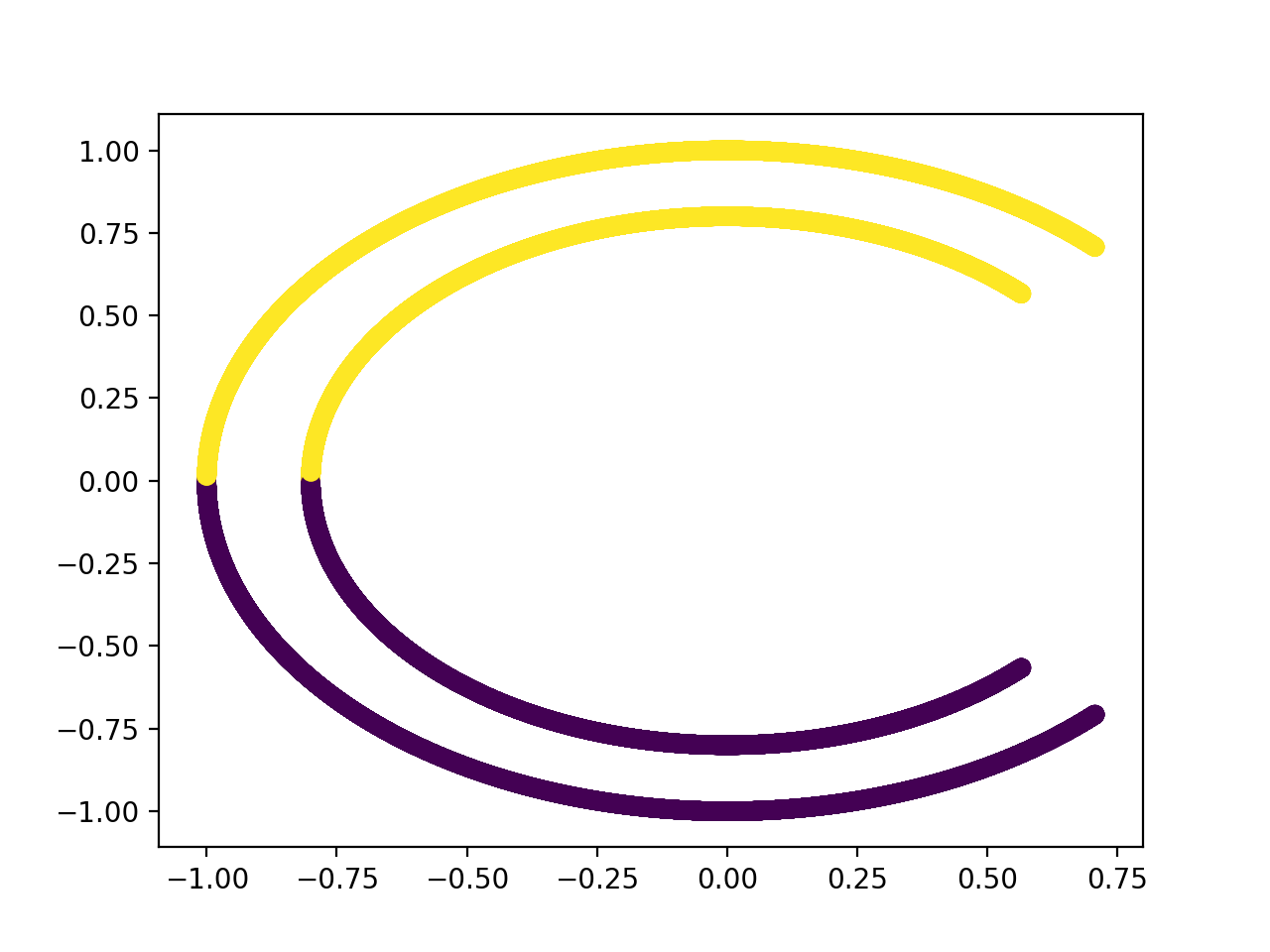}
  \caption{K-Means Clustering Result}
  \label{fig:1a}
\end{subfigure}\hfil 
\begin{subfigure}{0.35\textwidth}
  \includegraphics[width=1.3\linewidth]{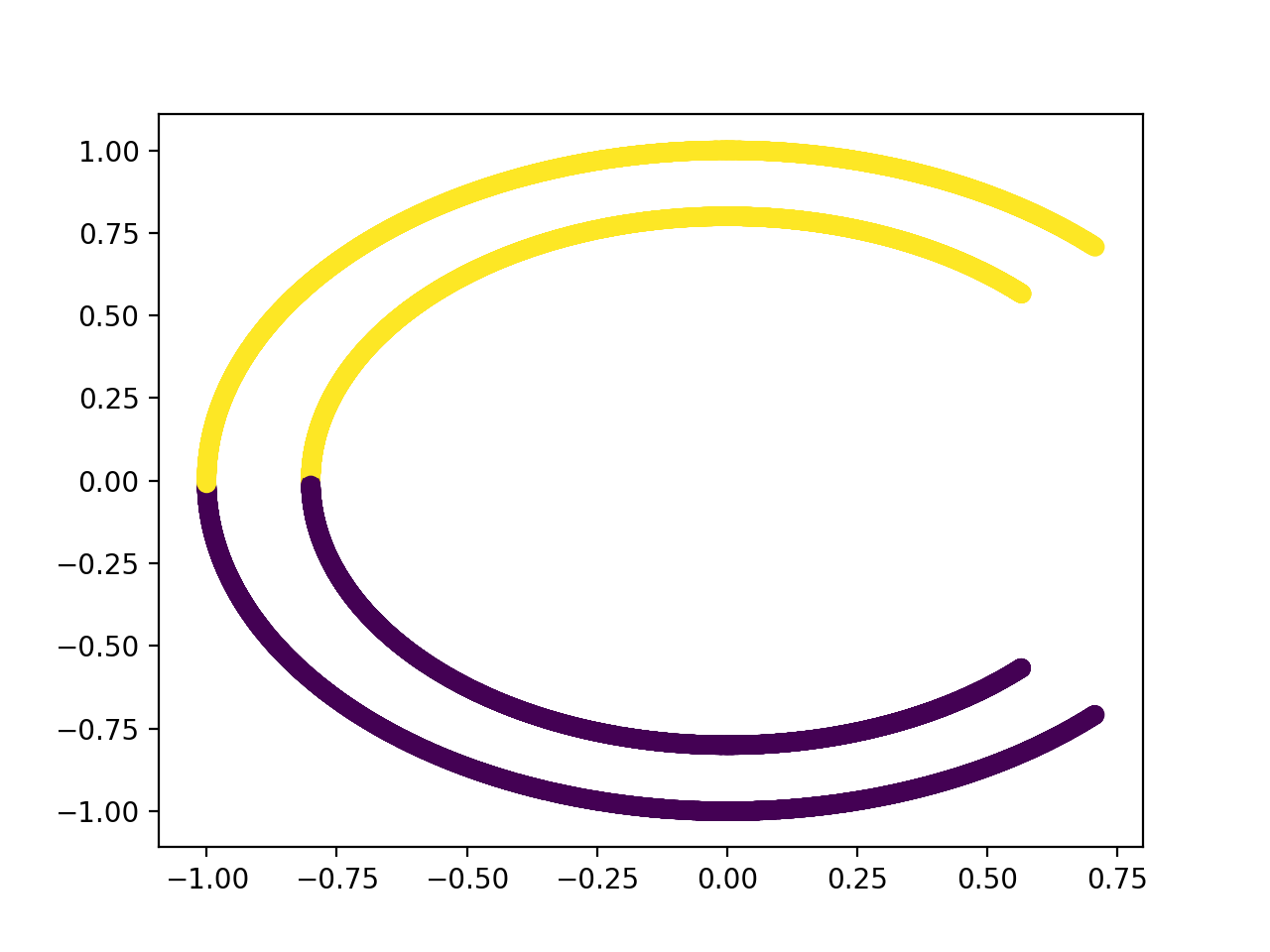}
  \caption{Spectral Clustering Result}
  \label{fig:2a}
\end{subfigure}\hfil 

\medskip
\begin{subfigure}{0.35\textwidth}
  \includegraphics[width=1.3\linewidth]{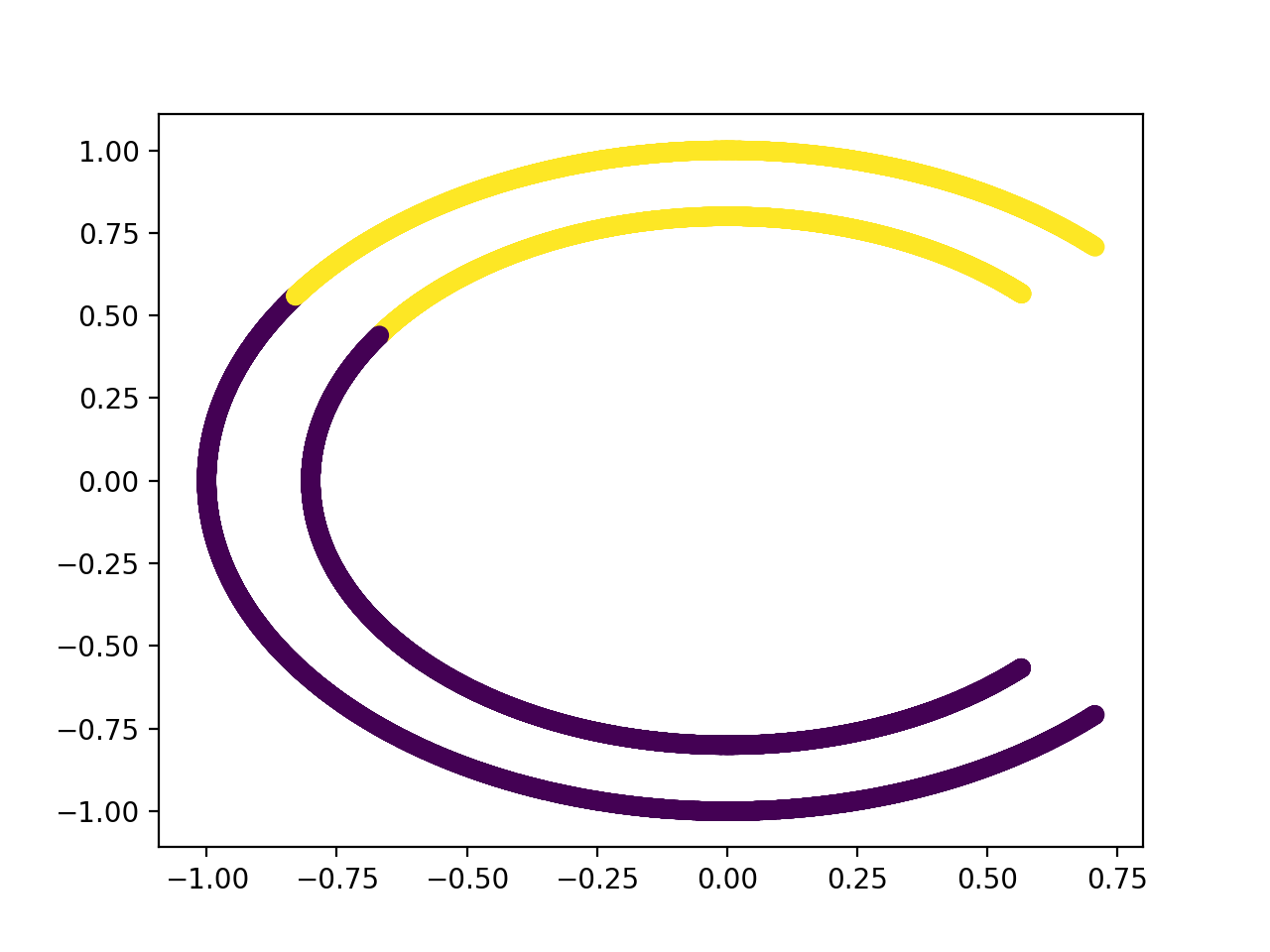}
  \caption{Hierarchical Clustering Result}
  \label{fig:3a}
\end{subfigure}\hfil 
\begin{subfigure}{0.35\textwidth}
  \includegraphics[width=1.3\linewidth]{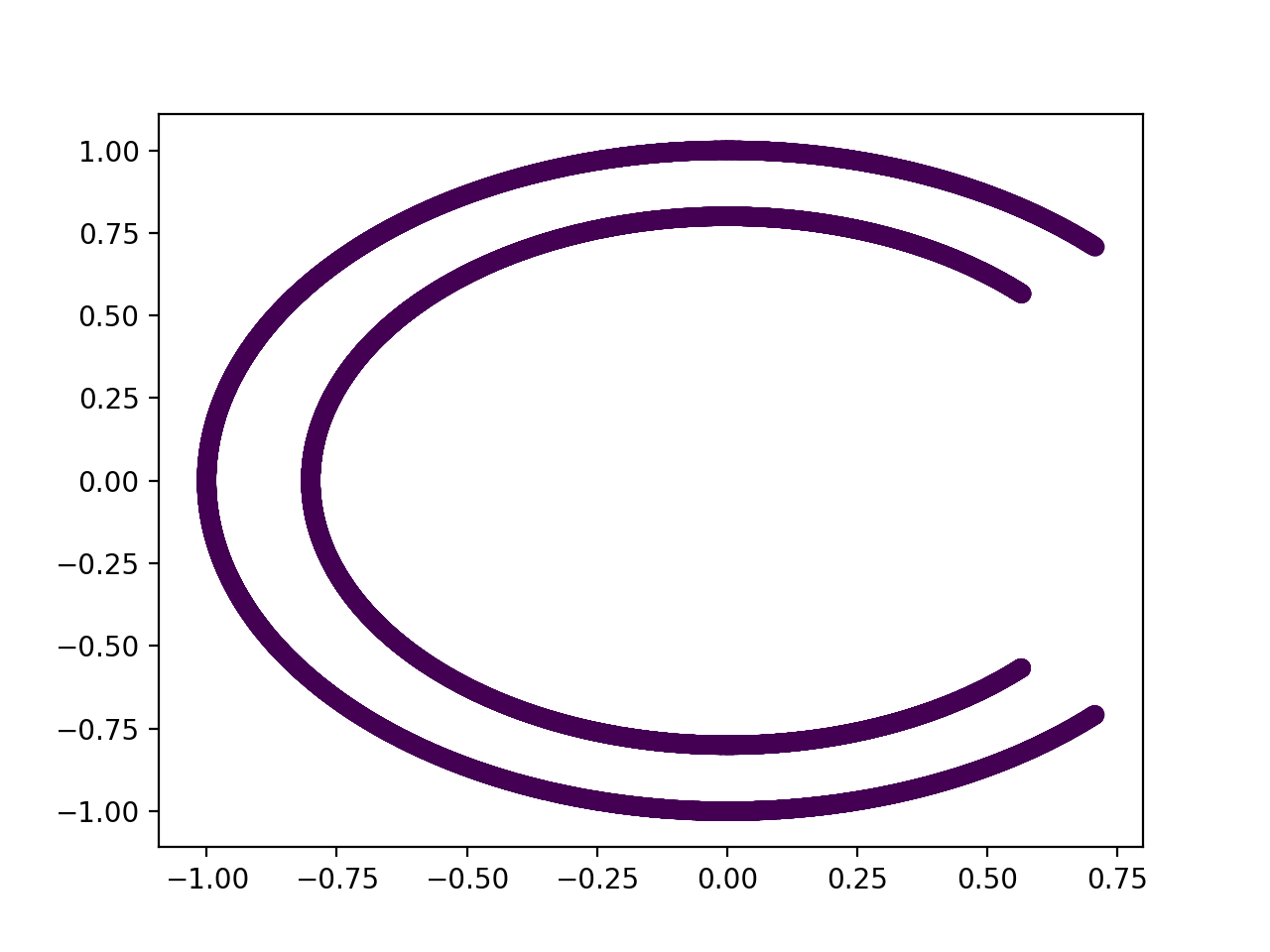}
  \caption{{\VaDE} Clustering Result}
  \label{fig:4a}
\end{subfigure}\hfil 
\caption{Clustering results obtained from four different unsupervised clustering methods, namely (a) K-means clustering; (b) spectral clustering; (c) hierarchical clustering; and (d) {\VaDE}.}
\label{fig:pacman_bad}
\end{figure}

\subsubsection{Experimental Setup}

We create 20,000 points, with 10,000 for the outer annulus and 10,000 for the inner annulus. Both annuli center at the origin, with the outer annulus having a radius of 1 and the inner annulus having a radius of 0.8. We create the training set by sampling 7,500 points from each annulus, and leave the rest of the data for testing. We create the linear responses by dividing the [0,1] range into 10,000 sub-intervals and assign the split points to the points in the inner annulus in a way that it is increasing (from 1 to 0) in the clockwise direction. We create the exponential responses by evaluating the exponential function at the aforementioned split points (generated for the linear responses), and then assign them to the points on the outer annulus in a way that it is decreasing (from 0 to 1) in the clockwise direction.

We use the \texttt{Adam} optimizer for optimization. We train with a batch size of 1,000 points, an initial learning rate of 0.001, and a learning rate decay of 10\% after every 10 epochs, for 80 epochs. 
We use the following network architecture:
\begin{center}
 \label{tab:pacman_a}
 \begin{tabular}{||c||} 
 \hline
 \textbf{Encoder}  \\ [0.5ex] 
 \hline\hline
  \FL(2,64,\sig) \\ 
 \hline
  \FL(64,128,\sig) \\ 
 \hline
  \FL(128,256,\sig) \\
 \hline
  \FL(256,60,\sig)\\
 \hline
\end{tabular}
\quad
 \begin{tabular}{||c||} 
 \hline
 \textbf{Decoder}  \\ [0.5ex] 
 \hline\hline
  \FL(60, 256,\sig) \\ 
 \hline
  \FL(256,128,\sig) \\
 \hline
  \FL(128,64,\sig) \\
 \hline
  \FL(64,2,\sig)\\
 \hline
\end{tabular}
\quad
 \begin{tabular}{||c||} 
 \hline
 \textbf{Task Network}  \\ [0.5ex] 
 \hline\hline
  \FL(64, 128,\sig) \\ 
 \hline
  \FL(128,4,\sig) \\ 
 \hline
\end{tabular}
\end{center}

\subsubsection{Attempts using Unsupervised Methods}

As mentioned in the main manuscript, we have tried the following unsupervised methods on the Pacman dataset to see how they perform: K-means clustering, hierarchical clustering, spectral clustering, and {\VaDE}. Fig.~\ref{fig:pacman_bad} shows the clustering results. None of these methods can separates the two annuli in a satisfactory way. This phenomenon echos a deep-rooted obstacle for clustering methods in general: the concept of clusters is inherently subjective, and different clustering methods can potentially produce very different clustering results. Furthermore, for most unsupervised clustering methods, it is nontrivial to incorporate a prior given information about the clusters, even when such information exists. Case in point, even when provided the information that the goal is to separate the two annuli, it is not clear how to incorporate that information into any of the unsupervised methods presented here.

\subsection{SVHN}
We apply \DGC\ to the Street View House Number (SVHN) dataset. This dataset consists of 73,257 training images, 26,032 test images, and 531,131 additional training images (that are easier than the ones in the training set). We train {\DGC} using all the training and extra images. 

We use the \texttt{Adam} optimizer for optimization. We train with a batch size of 128 images and a learning rate of 0.0001 (stays constant throughout epochs) for 150 epochs. 
We use the following network architecture:
\begin{center}
 \label{tab:svhn_a}
 \begin{tabular}{||c||} 
 \hline
 \textbf{Encoder}  \\ [0.5ex] 
 \hline\hline
  \Conv$\left(3, 48, 5, \ReLU,\batch, \maxp(2,2) \right)$ \\ 
 \hline
  \Conv$\left(48, 64, 5, \ReLU,\batch, \maxp(2,1) \right)$ \\ 
 \hline
  \Conv$\left(64, 128, 5, \ReLU,\batch, \maxp(2,2) \right)$ \\
 \hline
  \Conv$\left(128, 160, 5, \ReLU,\batch, \maxp(2,1) \right)$\\
   \hline
  \Conv$\left(160, 192, 5, \ReLU,\batch, \maxp(2,2) \right)$\\
  \hline
  \Conv$\left(192, 192, 5, \ReLU,\batch, \maxp(2,1) \right)$\\
    \hline
  \Conv$\left(192, 192, 5, \ReLU,\batch, \maxp(2,2) \right)$\\
    \hline
  \Conv$\left(192, 192, 5, \ReLU,\batch, \maxp(2,1) \right)$\\
    \hline
  \FL(9408,3072, \ReLU)\\
    \hline
  \FL(3072,256, \ReLU)\\
 \hline
\end{tabular}

 \begin{tabular}{||c||} 
 \hline
 \textbf{Decoder}  \\ [0.5ex] 
 \hline\hline
  \FL(256, 3072,\ReLU) \\ 
 \hline
  \Conv$\left(3072, 256, 4, \ReLU,\batch, \maxp(2,2) \right)$ \\
 \hline
  \Conv$\left(256, 128, 4, \ReLU,\batch, \maxp(2,1) \right)$ \\
 \hline
  \Conv$\left(128, 64, 4, \ReLU,\batch, \maxp(2,2) \right)$\\
 \hline
 \Conv$\left(64, 3, 4, \ReLU,\batch, \maxp(2,1) \right)$\\
 \hline
\end{tabular}
\quad
 \begin{tabular}{||c||} 
 \hline
 \textbf{Task Network}  \\ [0.5ex] 
 \hline\hline
  \FL(256, 512, \sig) \\ 
 \hline
  \FL(512,1024, \sig) \\ 
 \hline
   \FL(1024,512, \sig) \\ 
 \hline
   \FL(512,256, \sig) \\ 
 \hline
   \FL(256,100, \sig) \\ 
 \hline
\end{tabular}
\end{center}
\end{appendices}

\end{document}